\title{\textbf{Improving Adaptive Online Learning Using Refined Discretization}}
\author{
  \makebox[.3\textwidth]{Zhiyu Zhang} \\
  Harvard University\\
  \texttt{zhiyuz@seas.harvard.edu}\\
  \and
  \makebox[.3\textwidth]{Heng Yang} \\
  Harvard University\\
  \texttt{hankyang@seas.harvard.edu}\\
  \and
  \makebox[.3\textwidth]{Ashok Cutkosky} \\
  Boston University\\
  \texttt{ashok@cutkosky.com}\\
  \and
  \makebox[.3\textwidth]{Ioannis Ch. Paschalidis}\\
  Boston University\\
  \texttt{yannisp@bu.edu}\\
}
\date{}
\begin{document}
\maketitle

\begin{abstract}
We study unconstrained Online Linear Optimization with Lipschitz losses. Motivated by the pursuit of instance optimality, we propose a new algorithm that simultaneously achieves ($i$) the \textsc{AdaGrad}-style second order gradient adaptivity; and ($ii$) the comparator norm adaptivity also known as ``parameter freeness'' in the literature. In particular, 
\begin{itemize}
\item our algorithm does not employ the impractical doubling trick, and does not require an a priori estimate of the time-uniform Lipschitz constant;
\item the associated regret bound has the optimal $O(\sqrt{V_T})$ dependence on the gradient variance $V_T$, without the typical logarithmic multiplicative factor; 
\item the leading constant in the regret bound is ``almost'' optimal.
\end{itemize}

Central to these results is a continuous time approach to online learning. We first show that the aimed simultaneous adaptivity can be achieved fairly easily in a continuous time analogue of the problem, where the environment is modeled by an arbitrary continuous semimartingale. Then, our key innovation is a new discretization argument that preserves such adaptivity in the discrete time adversarial setting. This refines a non-gradient-adaptive discretization argument from \cite{harvey2023optimal}, both algorithmically and analytically, which could be of independent interest.
\end{abstract}

\section{Introduction}\label{section:intro}

We study unconstrained \emph{Online Linear Optimization} (OLO) with Lipschitz losses,\footnote{In a black-box manner, solutions of this problem can also solve bounded domain \emph{Online Convex Optimization} (OCO) with Lipschitz losses, as shown in \cite[Section~2.3]{orabona2023modern} and \cite[Section~4]{cutkosky2020parameter}. We start from the setting with a known Lipschitz constant $G$. The setting without knowing $G$ is considered in Section~\ref{section:lipschitz}.} which is a repeated game between us (the learner) and an adversarial environment denoted by $Env$. In each (the $t$-th) round, with a mutually known Lipschitz constant $G$:
\begin{enumerate}
\item We make a decision $x_t\in\R^d$ based on the observations before the $t$-th round.
\item The environment $Env$ reveals a loss gradient $g_t\in\R^d$ dependent on our decision history $x_1,\ldots,x_t$, which satisfies the Lipschitz condition with respect to the Euclidean norm, $\norm{g_t}\leq G$.
\item We suffer the linear loss $\inner{g_t}{x_t}$.
\end{enumerate}
The game ends after $T$ rounds, and then, our total loss is compared to that of an arbitrary fixed decision $u\in\R^d$. Without knowing the time horizon $T$, the environment $Env$ and the comparator $u$, our goal is to guarantee low \emph{regret}, defined as
\begin{equation*}
\reg_T(Env,u)\defeq\sum_{t=1}^T\inner{g_t}{x_t-u}.
\end{equation*}
In a nutshell, this paper proposes a novel and practical strategy to achieve the tightest known regret upper bound (even including a near-optimal leading constant) that depends simultaneously on the loss gradients $g_1,\ldots,g_T$ and the comparator $u$. 

To be concrete, we now introduce a bit more of the background. Existing research on this problem started from the minimax regime: under the additional assumption of $\norm{u}\leq D$, it has been long known that \emph{Online Gradient Descent} (OGD) \cite{zinkevich2003online} guarantees the optimal upper bound on the \emph{worst case regret}, $\sup_{Env;\norm{u}\leq D}\reg_T(Env,u)\leq O\rpar{DG\sqrt{T}}$. Refining such worst case optimality by instance optimality, improvements have been achieved under the notion of \emph{adaptive online learning}, with \emph{gradient adaptivity} and \emph{comparator adaptivity} being the two prominent types.
\begin{itemize}
\item Gradient adaptivity aims at bounding $\sup_{\norm{u}\leq D}\reg_T(Env,u)$ by a function of the observed gradient sequence $g_1,\ldots,g_T$. Using learning rates dependent on past observations, OGD can achieve the optimal \emph{second order} gradient adaptive bound \cite{mcmahan2010adaptive,duchi2011adaptive}
\begin{equation}\label{eq:gradient_adaptivity}
\sup_{\norm{u}\leq D}\reg_T(Env,u)\leq O\rpar{D\sqrt{V_T}},
\end{equation}
where $V_T\defeq\sum_{t=1}^T\norm{g_t}^2$ is the (uncentered) \emph{gradient variance}. This has been a hallmark of practical online learning algorithms, popularized by the massive success of \textsc{AdaGrad} \cite{duchi2011adaptive}.
\item Comparator adaptivity aims at bounding $\sup_{Env}\reg_T(Env,u)$ by a function of the comparator $u$. Without imposing the extra bounded-$u$ assumption, one could use a \emph{dual space framework} (which differs considerably from OGD) to achieve the optimal bound \cite{mcmahan2014unconstrained,zhang2022pde}
\begin{equation}\label{eq:comparator_adaptivity}
\sup_{Env}\reg_T(Env,u)\leq O\rpar{\norm{u}G\sqrt{T\log \norm{u}}}.
\end{equation}
Due to the absence of learning rates, such algorithms are also called ``parameter-free'' \cite{orabona2016coin} in the literature. They have exhibited the potential to reduce hyperparameter tuning in the modern deep learning workflow \cite{orabona2017training,cutkosky2023mechanic}. 
\end{itemize}

While both types of adaptivity are well-studied separately, achieving them simultaneously is an active research direction, which we call \emph{simultaneous adaptivity}. Again without the bounded-$u$ assumption, a series of works \cite{cutkosky2018black,mhammedi2020lipschitz,jacobsen2022parameter} have proposed drastically different approaches to obtain simultaneously adaptive regret bounds like\footnote{Omitting an additive $O\rpar{\norm{u}G\log(G^{-2}\norm{u}V_T)}$ term for clarity. Essentially, it means the order of $V_T$ is considered ``more important'' than the order of $\norm{u}$, which fits into the convention of the field.}
\begin{equation}\label{eq:simultaneous_old}
\reg_T(Env,u)\leq O\rpar{\norm{u}\sqrt{V_T\log (G^{-2}\norm{u} V_T)}},
\end{equation}
combining the strengths of Eq.(\ref{eq:gradient_adaptivity}) and Eq.(\ref{eq:comparator_adaptivity}) above. A remaining issue is that compared to the lower bound $O\rpar{\norm{u}\sqrt{V_T\log \norm{u}}}$ \cite{streeter2012no,orabona2013dimension,zhang2022pde}, Eq.(\ref{eq:simultaneous_old}) is still suboptimal due to a $\sqrt{\log V_T}$ multiplicative factor. That is, in terms of the dependence on the gradient variance $V_T$ alone (which is typically the emphasis of the field), Eq.(\ref{eq:simultaneous_old}) is $O\rpar{\sqrt{V_T\log V_T}}$ rather than the standard optimal rate $O\rpar{\sqrt{V_T}}$ as in Eq.(\ref{eq:gradient_adaptivity}). \textbf{The first goal of this paper, on the quantitative side, is to close this gap.}\footnote{\textcolor{red}{After the publication of this paper, we realized that the algorithm from \cite{jacobsen2022parameter} under a different hyperparameter setting not reported there can also achieve the $O\rpar{\norm{u}\sqrt{V_T\log \norm{u}}}$ bound. We refer the readers to Section~7 of its recently updated arXiv version \cite{jacobsen2024parameterfree} for the technical details. Our quantitative improvement over \cite{jacobsen2024parameterfree} will be the leading constant optimality.}}

To this end, we will take a detour through the \emph{continuous time} (CT), first solving a CT analogue of the problem, and then converting the solution back to \emph{discrete time} (DT). Quantitatively, our goal above can be seen as the gradient adaptive refinement of \cite{zhang2022pde} -- without considering gradient adaptivity, \cite{zhang2022pde} presents an algorithm designed in CT that \emph{natively} achieves the \textbf{optimal comparator adaptive bound} Eq.(\ref{eq:comparator_adaptivity}), while earlier algorithms designed in DT do not (unless they resort to the impractical \emph{doubling trick}) \cite{mcmahan2014unconstrained,orabona2016coin}. Broadly speaking, such a result exemplifies a higher level observation: while various benefits of the CT approach have been demonstrated in online learning before \cite{kapralov2011prediction,drenska2020prediction,kobzar2020a_new,zhang2022optimal,harvey2023optimal}, it remains unsatisfactory that no existing work (to the best of our knowledge) has used it to obtain DT gradient adaptive regret bounds, even though the CT analogue of gradient adaptivity is often natural\footnote{Example: in finance, the gradient variance is analogous to the \emph{price volatility}. This is ubiquitous in the continuous time modeling of financial instruments, such as the \emph{geometric Brownian motion}.} and fairly standard to achieve \cite{freund2009method,harvey2023optimal}. In other words, 

\begin{center}
\begin{minipage}{.9\textwidth}
\textbf{one would expect the CT approach to make gradient adaptivity (hence, simultaneous adaptivity) easier as well, but such a benefit has not been demonstrated in the literature.}
\end{minipage}
\end{center}

The key reason of this limitation appears to be the crudity of existing \emph{discretization arguments}, i.e., the modification applied to a CT algorithm and its analysis to make them work well in DT. The state-of-the-art technique, due to \cite{harvey2023optimal}, replaces the continuous derivative in \emph{potential-based} CT algorithms (i.e., FTRL \cite{abernethy2008competing} and \emph{dual averaging} \cite{nesterov2009primal}) by the discrete derivative, and consequently, the standard \emph{It\^{o}'s formula} in the CT regret analysis by the \emph{discrete It\^{o}'s formula}. Applying the discrete derivative amounts to implicitly assuming the worst case gradient magnitude ($\norm{g_t}=G$), therefore any gradient adaptivity in CT is lost in DT by construction. \textbf{The second goal of this paper, on the technical side, is to propose a refined discretization argument that preserves such gradient adaptivity.}

\subsection{Contribution}\label{subsection:contribution}

As motivated above, the contributions of this paper are twofold.

\paragraph{Quantitative side} Starting from the results of \cite{zhang2022pde,harvey2023optimal}, we first show that in a CT analogue of the OLO problem, simultaneous adaptivity is quite easy to obtain by combining It\^{o}'s formula and the \emph{Backward Heat Equation} (BHE), a partial differential equation characterizing the reversal of heat diffusion. Building on this observation, our main result is a new DT algorithm achieving the following \textbf{near-optimal simultaneously adaptive regret bound} without the impractical doubling trick. With an arbitrary hyperparameter $\eps>0$, in the asymptotic regime of large $\norm{u}$ and $V_T$, 
\begin{equation}\label{eq:contribution_big_oh}
\reg_T(Env,u)\leq \eps\cdot O\rpar{\sqrt{V_T}}+\norm{u}\cdot O\rpar{\sqrt{V_T\log(\norm{u}\eps^{-1})}\vee G\log(\norm{u}\eps^{-1})}.
\end{equation}
Furthermore, given any hyperparameter $\alpha>\half$, the multiplying constant on the leading order term $\norm{u}\sqrt{V_T\log (\norm{u}\eps^{-1})}$ is $\sqrt{4\alpha}$, almost\footnote{There is an additive factor on $V_T$ proportional to $(\alpha-\half)^{-1}G^2$. For any fixed $\alpha>\half$ this additive factor is negligible in the large-$V_T$ regime, but we cannot pick $\alpha=\half$.} matching the $\sqrt{2}$ lower bound from \cite{zhang2022pde}. Comparing it to the literature:
\begin{itemize}
\item Excluding the recent arXiv update \cite{jacobsen2024parameterfree}, this is the first simultaneously adaptive regret bound matching the optimal $O(\sqrt{V_T})$ rate (with respect to $V_T$ alone), improving a series of prior works \cite{cutkosky2018black,mhammedi2020lipschitz,chen2021impossible,jacobsen2022parameter}. However, we note that \cite[Section~7]{jacobsen2024parameterfree} presents a different tuning of the algorithm from \cite{jacobsen2022parameter}, achieving the same big-Oh bound as Eq.(\ref{eq:contribution_big_oh}).
\item Within simultaneously adaptive online learning, our algorithm nearly achieves the \emph{leading constant optimality}, improving all the aforementioned works (including \cite{jacobsen2024parameterfree} as well). 
\end{itemize}

We also generalize the above to the setting without a known Lipschitz constant $G$, making the algorithm and its regret bound \emph{scale-free}. Since the hyperparameter $\eps$ is truly ``unitless'' in our algorithm design, there is no need to estimate the \emph{time-uniform Lipschitz constant} $\max_{t\in[1:T]}\norm{g_t}$ at the beginning, which eliminates the \emph{range ratio problem} from existing works \cite{mhammedi2020lipschitz,jacobsen2022parameter}. This keeps the algebra simple, while also avoiding the standard range-ratio-induced penalties in the regret bound. 

\paragraph{Technical side} Our key technical innovation is a new gradient adaptive discretization argument, refining the non-adaptive one from \cite{harvey2023optimal}. The essential idea is connecting the CT algorithm and its DT analogue via a \emph{change of variables}, which allows using the \emph{exact} BHE from CT to simplify the complicated algebra in DT. For our specific problem of simultaneous adaptivity, this procedure is arguably easier and more intuitive than existing approaches \cite{cutkosky2018black,mhammedi2020lipschitz,jacobsen2022parameter} that tackle DT directly.

\subsection{Notation}\label{subsection:notation}

Let $C^{1,2}(\X)$ be the class of bivariate functions on an open set $\X$, continuously differentiable in their first argument and twice continuously differentiable in their second argument. For any $\Phi\in C^{1,2}(\X)$, let $\partial_1\Phi$ and $\partial_2\Phi$ be its first order partial derivatives with respect to the first and the second argument of $\Phi$. Similarly, $\partial_{11}\Phi$, $\partial_{12}\Phi$ and $\partial_{22}\Phi$ denote the second order partial derivatives ($\partial_{12}\Phi=\partial_{21}\Phi$). For all $x$ and $u$, define $\Phi^*_x(z)\defeq\sup_{y}[zy-\Phi(x,y)]$, i.e., the Fenchel conjugate of $\Phi$ with respect to its second argument. 

We define the \emph{imaginary error function} as $\erfi(x)=\int_0^x\exp(u^2)du$; this is scaled by $\sqrt{\pi}/2$ from the conventional definition, thus can also be queried from standard software packages like \textsc{SciPy} and \textsc{JAX}. Let $\erfi^{-1}$ be its inverse function. 

$\Pi_\X(x)$ is the Euclidean projection of $x$ onto a closed convex set $\X$. $\log$ represents natural logarithm when the base is omitted. Throughout this paper, $\norm{\cdot}$ denotes the Euclidean norm.

\section{Related work}\label{section:related}

\paragraph{Simultaneous adaptivity} Within the two types of adaptive online learning, comparator adaptivity is typically considered more challenging than gradient adaptivity. Therefore, existing works on simultaneous adaptivity are built on the core algorithmic frameworks of comparator adaptivity, with various ``gradient adaptive modifications''. Let us take a closer look.
\begin{itemize}
\item The state-of-the-art result for comparator adaptivity alone is due to \cite{zhang2022pde}, which achieves not only the order-optimal regret bound Eq.(\ref{eq:comparator_adaptivity}), but also the optimal leading constant $\sqrt{2}$. It relies on a CT analysis combining the \emph{potential verification} argument from \cite{harvey2023optimal} and the \emph{loss-regret duality} from \cite{mcmahan2014unconstrained}. Our algorithm is similar but also gradient adaptive. 

\item Simultaneous gradient and comparator adaptivity was achieved in \cite{cutkosky2018black} for the first time, through a nested \emph{coin betting} approach \cite{orabona2016coin}. Later on, \cite{mhammedi2020lipschitz} proposed potential-based algorithms that are also \emph{scale-free} \cite{orabona2018scale}, and the analysis utilizes a novel computer aided procedure. Both of them guarantee Eq.(\ref{eq:simultaneous_old}), which is $\sqrt{\log V_T}$ away from the optimal dependence on $V_T$. As discussed in Appendix~\ref{section:more_background}, the well-known doubling trick (i.e., restarting) is insufficient for closing this gap. 

\item Most recently, \cite{jacobsen2022parameter} achieved Eq.(\ref{eq:simultaneous_old}) (modulo $\log(\log)$ factors) using a regularized variant of \emph{Online Mirror Descent} (OMD). The framework is actually very general: its latest arXiv update \cite[Section~7]{jacobsen2024parameterfree} shows that under a different tuning, the same algorithm can achieve the $O\rpar{\norm{u}\sqrt{V_T\log \norm{u}}}$ bound we aim for, even without the $\log(\log)$ factor from \cite{jacobsen2022parameter}. Nonetheless, the leading constant in this bound is $6$, while we nearly improve it to the optimal value $\sqrt{2}$. 

\item Besides, if logarithmic terms are omitted altogether, then simultaneous adaptivity can also be achieved by a very powerful aggregation approach \cite{chen2021impossible}. Its practicality is limited by the increased computation: $O(T)$ base algorithms are maintained at the same time. 
\end{itemize}

\paragraph{Continuous time approach} Our result fits into an emerging direction of online learning: exploiting the synergy between CT and DT algorithms. Concrete benefits in DT, including better bounds and simpler analyses, have been demonstrated in various settings of minimax online learning \cite{bayraktar2020finite,drenska2020prediction,kobzar2020a_new,wang2022new,harvey2023optimal} and adaptive online learning \cite{kapralov2011prediction,daniely2019competitive,portella2022continuous,zhang2022pde,zhang2022optimal}. Conversely, such a synergy can benefit traditional ``model-based'' CT decision making as well; for example, the celebrated \emph{Black-Scholes model} \cite{black1973pricing} for option pricing can be derived as the scaling limit of a DT adversarial online learning model, which provides a strong justification of its validity \cite{abernethy2012minimax,abernethy2013hedge}. 

Most of these works establish the CT-DT synergy via potential-based algorithms. Roughly speaking,\footnote{For clarity, we do not account for gradient adaptivity in this discussion. Essentially, the idea of second order gradient adaptivity is replacing $t$ by the running gradient variance $V_t=\sum_{i=1}^t\norm{g_i}^2$.} the decision at time $t$ has the form $\phi'_t(S_t)$, which denotes the derivative of a potential function $\phi_t$ evaluated at a ``sufficient statistic'' $S_t$ that summarizes the history. In the CT regime, the crucial simplicity is that a suitable $\phi_t$ satisfies a PDE, thus finding it can be a tractable task. The tricky step is to properly convert this CT algorithm to DT and quantify their performance discrepancy, which we call the discretization argument. 

The most natural idea is to apply the CT algorithm to DT as is, and characterize the performance discrepancy using Taylor's theorem \cite{abernethy2013hedge,kobzar2020a_new}. However, this approach requires a terminal condition at a fixed time horizon $T$, which is missing from many settings of adaptive online learning. Harvey et al. \cite{harvey2023optimal} proposed a particularly strong and elegant alternative: replacing the standard derivative $\phi'_t(S_t)$ in the CT algorithm by the discrete derivative, e.g., $\frac{1}{2G}[\phi_t(S_t+G)-\phi_t(S_t-G)]$. Then, the performance discrepancy between CT and DT can be characterized by a DT analogue of the PDE, which can be analyzed in a principled manner. Such an analysis has been adopted in several recent works \cite{greenstreet2022efficient,portella2022continuous,zhang2022pde,zhang2022optimal}, but the downside is that any gradient adaptive upgrade on the CT algorithm (not hard to obtain, as shown in Section~\ref{section:continuous}) is lost in DT by construction. The present work addresses this limitation.

\section{Warm up: Adaptivity in continuous time}\label{section:continuous}

To begin with, we study a one dimensional continuous time analogue of the unconstrained OLO problem, in order to demonstrate the inherent simplicity of simultaneous adaptivity. The restriction to 1D is justified by a well-known polar-decomposition technique from \cite{cutkosky2018black}, which will be made concrete in Section~\ref{section:discretization}. 

Technically, much of this section is standard: the critical use of It\^{o}'s formula in CT online learning was pioneered by Freund \cite{freund2009method} and greatly streamlined by Harvey et al. \cite[Appendix~B]{harvey2023optimal}. Our treatment of simultaneous adaptivity will follow Harvey et al.'s argument and a classical loss-regret duality from \cite{mcmahan2014unconstrained}. Nonetheless, it embodies the key intuition, thus paves the way for our main contribution on the discretization argument.

\subsection{Setting}

Unlike the DT adversarial setting universally recognized as a repeated game, the definition of a reasonable CT analogue has been elusive. First, following \cite{harvey2023optimal}, we model the combined actions of the CT environment, i.e., the CT analogue of the gradient sum $\sum_{i=1}^tg_i$, as an arbitrary \emph{continuous semimartingale} denoted by $S_t$ ($t\in\R_{\geq 0}$),\footnote{In DT, the gradient sum $\sum_{i=1}^tg_i$ is a classical quantity for dual space online learning algorithms, sometimes called the ``sufficient statistic''. In the CT analogue, we essentially assume the sufficient statistic evolves as a stochastic process with a very general law. Our treatment has a slight difference from \cite{harvey2023optimal}: the latter models $\abs{S_t}$ rather than $S_t$.} with $S_0=0$ -- examples include the \emph{Brownian motion} and the \emph{It\^{o} process}.\footnote{It\^{o}'s process is a general form of diffusion defined by a differential equation $\d S_t=\sigma_t\d B_t+\mu_t\d t$, where $B$ is the standard Brownian motion. Here, the \emph{diffusion coefficient} $\sigma$ and the \emph{drift coefficient} $\mu$ are both stochastic processes. Rigorous definitions of relevant stochastic process concepts can be found in \cite{revuz2013continuous}.} Such an assumption is motivated by the analysis: semimartingales form the largest class of integrators with respect to which the It\^{o} integral can be defined; ``continuous'' means that the sample paths are continuous almost surely -- this avoids the jump-correction terms in the It\^{o}'s formula. 

Next, for any continuous semimartingale $S$, one can define its \emph{quadratic variation process}, denoted by $[S]_t$. Intuitively, $[S]_t$ is the CT analogue of the 1D gradient variance $V_t=\sum_{i=1}^tg_i^2$. For the standard Brownian motion, $[S]_t=t$. For the It\^{o}'s process, $[S]_t=\int_0^t\sigma^2_s\d s$, where $\sigma$ is the diffusion coefficient of $S$ (note that $\sigma$ is itself a stochastic process). 

After characterizing the CT environment, let us turn to the CT learner. We consider the potential framework. The learner fixes a potential function $\phi\in C^{1,2}(\X)$ at the beginning, where $\X$ is an open set containing $\R_{\geq 0}\times\R$. The learner's decision against the environment $S$ is $\partial_2\phi([S]_t,-S_t)$, which is a continuous process -- this mirrors the standard FTRL family in DT \cite[Chapter~7]{orabona2023modern}. Furthermore, the ``adversarialness'' of the DT setting is analogous to the fact that the law of the CT environment $S$ can depend on the learner's potential function $\phi$. 

With the above, the learner's total loss over a time horizon $T$ is the It\^{o} integral $\int_0^T\partial_2\phi([S]_t,-S_t)\d S_t$ \cite[Definition~IV.2.9]{revuz2013continuous}, and any fixed comparator $u$ induces the total loss $uS_T$. The goal of this CT problem is thus choosing $\phi$ to minimize the \emph{continuous time regret},
\begin{equation*}
\reg^{\mathrm{CT}}_T(Env,u)\defeq\rpar{\int_0^T\partial_2\phi([S]_t,-S_t)\d S_t}-uS_T.
\end{equation*}

\subsection{Analysis}

The crucial simplicity of the CT setting can be seen in the following theorem. This is new to the literature, but just a combination of steps in \cite[Theorem~B.2]{harvey2023optimal} and \cite[Theorem~9.6]{orabona2023modern}.

\begin{theorem}\label{theorem:continuous}
If $\phi\in C^{1,2}(\X)$ satisfies the Backward Heat Equation (BHE) $\partial_1\phi+\half\partial_{22}\phi=0$, then for all $T\in\R_{\geq 0}$ and $u\in\R$, almost surely,
\begin{equation*}
\reg^{\mathrm{CT}}_T(Env,u)\leq \phi(0,0)+\phi^*_{[S]_T}(u).
\end{equation*}
Here we follow the notation from Section~\ref{subsection:notation}: $\phi^*_{\cdot}(\cdot)$ is the Fenchel conjugate of $\phi$ with respect to its second argument.
\end{theorem}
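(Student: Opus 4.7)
The plan is to apply It\^{o}'s formula to the process $\phi([S]_t,-S_t)$ and use the BHE to cancel out the bounded-variation terms, leaving an expression for the It\^{o} integral $\int_0^T\partial_2\phi([S]_t,-S_t)\,\d S_t$ in closed form. Then a single Fenchel inequality converts this into the claimed regret bound.

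First, I would note that $\phi\in C^{1,2}(\X)$ with $\X\supset\R_{\geq 0}\times\R$ means It\^{o}'s formula applies to the 2D semimartingale $([S]_t,-S_t)$, since $[S]_t$ is continuous and non-decreasing (hence of bounded variation and a semimartingale with zero quadratic variation), and $-S_t$ is a continuous semimartingale. Because $[S]$ has bounded variation, the cross variation $[[S],-S]_t$ and the self variation $[[S],[S]]_t$ both vanish, so the only second-order term that survives is the one in $\partial_{22}\phi$. Writing out the formula yields
\begin{equation*}
\phi([S]_T,-S_T)-\phi(0,0)=\int_0^T\partial_1\phi([S]_t,-S_t)\,\d [S]_t-\int_0^T\partial_2\phi([S]_t,-S_t)\,\d S_t+\tfrac{1}{2}\int_0^T\partial_{22}\phi([S]_t,-S_t)\,\d [S]_t.
\end{equation*}

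Second, I would combine the two $\d[S]_t$ integrals into a single integrand $\partial_1\phi+\tfrac{1}{2}\partial_{22}\phi$, which vanishes identically by the BHE. Rearranging gives the key identity
\begin{equation*}
\int_0^T\partial_2\phi([S]_t,-S_t)\,\d S_t=\phi(0,0)-\phi([S]_T,-S_T),\quad\text{almost surely.}
\end{equation*}
Substituting into the definition of $\reg^{\mathrm{CT}}_T(Env,u)$ yields $\reg^{\mathrm{CT}}_T(Env,u)=\phi(0,0)-\phi([S]_T,-S_T)-uS_T$ almost surely.

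Third, I would apply the definition of the Fenchel conjugate $\phi^*_x(z)=\sup_y[zy-\phi(x,y)]$ at the point $y=-S_T$, which gives $\phi^*_{[S]_T}(u)\geq -uS_T-\phi([S]_T,-S_T)$. Adding $\phi(0,0)$ to both sides produces the stated bound. The main (only) potential obstacle is the application of It\^{o}'s formula: one must verify that $([S]_t,-S_t)$ takes values in the open set $\X$ on which $\phi$ is $C^{1,2}$, which is immediate from $[S]_t\geq 0$ and $\X\supset\R_{\geq 0}\times\R$, together with the measurability conventions required for the It\^{o} integral $\int_0^T\partial_2\phi([S]_t,-S_t)\,\d S_t$ to be well defined; both are standard for continuous semimartingale integrators and $C^{1,2}$ integrands.
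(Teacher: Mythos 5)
Your proposal is correct and follows essentially the same route as the paper: the paper applies its stated It\^{o}'s formula (Lemma~\ref{lemma:ito}) with $f\leftarrow-\phi$ and $X\leftarrow-S$, which is exactly the identity you derive by hand from the two-dimensional It\^{o} expansion with the bounded-variation component $[S]_t$, and then both arguments invoke the BHE to kill the $\d[S]$ integrals and finish with the Fenchel inequality at $y=-S_T$. The only cosmetic difference is that you unpack the It\^{o} formula explicitly rather than citing it as a packaged lemma.
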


Let us include the proof for completeness, which also highlights the ideal type of analysis that the DT regime should also follow. The central component is the It\^{o}'s formula, i.e., the stochastic analogue of the chain rule. The specific version below combines two results from \cite{revuz2013continuous}: Proposition~IV.1.18, and Remark 1 after Theorem~IV.3.3. 

\begin{lemma}[It\^{o}'s formula]\label{lemma:ito}
If $f\in C^{1,2}(\X)$ and $X$ is a continuous semimartingale, then for all $T\in\R_{\geq 0}$, almost surely,
\begin{equation*}
f\rpar{[X]_T, X_T}-f\rpar{0,X_0}=\int_0^T\partial_2f\rpar{[X]_t, X_t}\d X_t+\int_0^T\spar{\partial_1f\rpar{[X]_t, X_t}+\frac{1}{2}\partial_{22}f\rpar{[X]_t, X_t}}\d [X]_t.
\end{equation*}
\end{lemma}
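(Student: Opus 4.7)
The plan is to prove It\^{o}'s formula via a stochastic Taylor expansion along a refining sequence of partitions, which is the classical pathwise/$L^2$ argument behind the Revuz--Yor presentation. Since the claim is almost sure, I first \emph{localize}: introduce stopping times so that $X$, $[X]_T$, and the relevant partial derivatives of $f$ evaluated along $([X]_t, X_t)$ are uniformly bounded on $[0,T]$. This lets me apply dominated convergence freely, and the general case follows by letting the localization time tend to infinity. Structurally, it is helpful to view the statement as the multi-dimensional It\^{o} formula applied to the 2D semimartingale $Y_t=([X]_t, X_t)$, exploiting that the first component has finite variation and therefore contributes no quadratic variation and no cross-variation with $X$.

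Fix a partition $0 = t_0 < t_1 < \cdots < t_n = T$ of $[0,T]$ and telescope:
\begin{equation*}
f([X]_T, X_T) - f(0, X_0) \;=\; \sum_{k=0}^{n-1}\bigl[f([X]_{t_{k+1}}, X_{t_{k+1}}) - f([X]_{t_k}, X_{t_k})\bigr].
\end{equation*}
Apply Taylor's theorem to each summand, keeping first order in the $[X]$-argument (whose increments are of bounded variation, so higher orders in that direction are negligible) and second order in the $X$-argument. This produces
\begin{equation*}
\sum_k \partial_1 f\, \Delta [X]_k \;+\; \sum_k \partial_2 f\, \Delta X_k \;+\; \tfrac{1}{2}\sum_k \partial_{22} f\, (\Delta X_k)^2 \;+\; R_n,
\end{equation*}
where $\Delta$ denotes the increment across $[t_k, t_{k+1}]$, the partial derivatives are evaluated at $([X]_{t_k}, X_{t_k})$, and $R_n$ collects mixed and higher-order remainder terms.

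The main obstacle is identifying the limits as the mesh $|\Pi|\to 0$. The first two principal sums are routine: $\sum_k \partial_1 f\, \Delta [X]_k$ converges pathwise to $\int_0^T \partial_1 f\, \d [X]_t$ by finite variation of $[X]$ combined with uniform continuity of $\partial_1 f$ on the bounded path set, and $\sum_k \partial_2 f\, \Delta X_k$ converges in probability to $\int_0^T \partial_2 f\, \d X_t$ because its integrand is precisely the simple-process approximation used to \emph{define} the It\^{o} integral against a continuous semimartingale. The hard step is
\begin{equation*}
\sum_k \partial_{22} f\, (\Delta X_k)^2 \;\longrightarrow\; \int_0^T \partial_{22} f\, \d [X]_t,
\end{equation*}
which is exactly where the quadratic variation and the ``$\tfrac{1}{2}\partial_{22}$'' correction originate. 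I would handle it by first establishing $\sum_k (\Delta M_k)^2 \to [M]_T$ in $L^2$ for a bounded continuous martingale $M$ via the isometry characterization of $[M]$, then extending to the canonical decomposition $X = M + A$ by polarization and the observation that $\max_k |\Delta X_k|\to 0$ by path continuity, and finally weighting by $\partial_{22} f$ using its uniform continuity. The remainder $R_n$ vanishes by the same modulus-of-continuity estimates: the cross term $\sum_k \partial_{12} f\, \Delta [X]_k\, \Delta X_k$ is bounded by $\max_k |\Delta X_k|$ times the total variation of $[X]$, while the second-order Taylor remainders are bounded by the modulus of continuity of $\partial_{22} f$ times $\sum_k (\Delta X_k)^2$, which stays bounded. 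Combining the three limits yields the identity on the stopped process, and delocalization completes the proof.
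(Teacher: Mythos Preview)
The paper does not prove this lemma at all: it simply cites it as a combination of Proposition~IV.1.18 and Remark~1 after Theorem~IV.3.3 in Revuz--Yor, and then uses it as a black box in the proof of Theorem~\ref{theorem:continuous}. Your proposal, by contrast, is a self-contained sketch of the classical proof of It\^{o}'s formula via localization and a stochastic Taylor expansion along refining partitions. The sketch is correct and is essentially the argument that appears in Revuz--Yor itself: the identification of the three principal sums, the key step $\sum_k \partial_{22}f\,(\Delta X_k)^2 \to \int \partial_{22}f\,\d[X]_t$, and the remainder control via moduli of continuity are all standard and handled appropriately. So you have supplied strictly more than the paper does here, and what you supplied is sound; for the purposes of this paper, however, a citation suffices since It\^{o}'s formula is a foundational result rather than a contribution.
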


In the language of the potential framework, the It\^{o}'s formula is a potential verification argument (on $f$), but in the strongest form: an equality. The regret bound can then be proved with ease. 

\begin{proof}[Proof of Theorem~\ref{theorem:continuous}]
Applying Lemma~\ref{lemma:ito} with $f\leftarrow -\phi$ and $X\leftarrow -S$, and further using the BHE $\partial_1\phi+\half\partial_{22}\phi=0$ to eliminate the integral with respect to $[S]$, we have
\begin{align*}
\reg^{\mathrm{CT}}_T(Env,u)&=\phi\rpar{0,0}-uS_T-\phi\rpar{[S]_T, -S_T}\\
&\leq \phi\rpar{0,0}+\sup_{y\in\R}\spar{uy-\phi\rpar{[S]_T, y}}.
\end{align*}
The proof is complete by plugging in the definition of the Fenchel conjugate $\phi^*_{[S]_T}(u)$.
\end{proof}

It remains to pick a specific potential function $\phi$. Similar to \cite{zhang2022pde,harvey2023optimal}, with arbitrary constants $\eps,\delta>0$, we define
\begin{equation}\label{eq:ct_potential}
\phi^{\mathrm{CT}}(x,y)=\eps\sqrt{x+\delta}\rpar{2\int_0^{\frac{y}{\sqrt{2(x+\delta)}}}\erfi(u)du-1},
\end{equation}
which satisfies the BHE. Also, the shift $\delta$ on the first argument $x$ ensures $\phi^{\mathrm{CT}}\in C^{1,2}(\R_{>-\delta}\times\R)$. Plugging in the Fenchel conjugate computation from \cite[Theorem~4]{zhang2022pde}, Theorem~\ref{theorem:continuous} becomes
\begin{equation}\label{eq:ct_regret}
\reg^{\mathrm{CT}}_T(Env,u)\leq \eps\sqrt{[S]_T+\delta}+\abs{u}\sqrt{2\rpar{[S]_T+\delta}}\spar{\sqrt{\log\rpar{1+\frac{\abs{u}}{\sqrt{2}\eps}}}+1}.
\end{equation}
With $\eps=1$, $\reg^{\mathrm{CT}}_T(Env,u)=O\rpar{\abs{u}\sqrt{[S]_T\log\abs{u}}}$, which is the desirable CT simultaneously adaptive bound, analogous to the $O\rpar{\abs{u}\sqrt{V_T\log\abs{u}}}$ bound we aim for in DT. Furthermore, the leading constant $\sqrt{2}$ in Eq.(\ref{eq:ct_regret}) matches the optimal leading constant in the DT setting \cite{zhang2022pde}.

To conclude, the key takeaway is that in CT, one can use the It\^{o}'s formula and the Backward Heat Equation to achieve an ``ideal form'' of simultaneous adaptivity fairly easily. In some sense, they capture the important problem structure, which suggests that their DT analogues could potentially improve and simplify prior works that do not exploit such structures. Next, we make this intuition concrete.

\section{Main result: Refined discretization}\label{section:discretization}

In this section, we consider a DT setting that slightly generalizes the one at the beginning of this paper. Let us assume the Lipschitz constant $G$ is unknown, but at the beginning of each (the $t$-th) round we have access to a \emph{hint} $h_t$ which satisfies $h_{t}\geq h_{t-1}$ and $\norm{g_t}\leq h_t$ (initially, assume $h_0=h_1>0$ w.l.o.g.). Such a setting is motivated by \cite{cutkosky2019artificial}, where designing a full Lipschitzness-adaptive algorithm can be reduced to solving this OLO problem with hints; details will be discussed in Section~\ref{section:lipschitz}. For clarity, one may think of $h_t=G$ when $G$ is known.

Our solution centers around a 1D potential function $\Phi_t$ defined by a \emph{change of variables}. With hyperparameters $\eps,\alpha,k_t>0$ and $z_t>k_th_t$, define
\begin{equation}\label{eq:potential}
\Phi_t(V,S)\defeq\phi(V+z_t+k_tS,S),
\end{equation}
where
\begin{equation*}
\phi(x,y)\defeq\eps\sqrt{\alpha x}\rpar{2\int_0^{\frac{y}{\sqrt{4\alpha x}}}\erfi(u)du-1}.
\end{equation*}
Here, $\phi$ is a generalization of the CT potential (\ref{eq:ct_potential}), satisfying $\partial_1\phi+\alpha\partial_{22}\phi=0$, the generalized Backward Heat Equation with constant $\alpha$. It can be verified that $\phi\in C^{1,2}(\R_{>0}\times\R)$. 

Intuitively, $\Phi_t$ is the potential function we apply in the $t$-th round, therefore $k_t$ and $z_t$ should be functions of the hint $h_t$. By a simple dimensional analysis, $z_t\propto h_t^2$, $k_t\propto h_t$, while $\eps$ and $\alpha$ are real numbers. Also, the definition of $\Phi_t(V,S)$ is only valid when $S$ is larger than a threshold, since the first argument of $\phi$ can only be positive -- the choice of $z_t$ and $k_t$ will ensure that all the ``interesting'' values of $S$ are above this threshold.

\subsection{Algorithm}

Similar to many other comparator adaptive OLO algorithms, our algorithm has a two-level hierarchical structure. On the high level is the meta algorithm from \cite[Section~3]{cutkosky2018black}, decomposing the OLO problem on $\R^d$ into two independent subtasks: learning the direction and the magnitude of the comparator. Direction learning is handled by gradient adaptive OGD on a unit norm ball. Magnitude learning is handled by the novel 1D base algorithm employing the potential function $\Phi_t$, followed by a constraint-imposing technique \cite[Section~4]{cutkosky2020parameter} which restricts its output from $\R$ to $\R_{\geq 0}$. 

Concretely, we present the important 1D base algorithm as Algorithm~\ref{alg:base}, while the meta algorithm is presented as Algorithm~\ref{alg:meta}. Since the base algorithm is updated using the surrogate feedback provided by the meta algorithm, we denote these surrogate algorithmic quantities with tilde. 

\begin{algorithm*}[ht]
\caption{1D base algorithm\label{alg:base}}
\begin{algorithmic}[1]
\REQUIRE The potential function $\Phi_t$ defined in Eq.(\ref{eq:potential}). Hints $h_1,h_2,\ldots\in\R_{> 0}$ satisfying $h_t\geq h_{t-1}$. Surrogate loss gradients $\tilde l_1,\tilde l_2,\ldots$ satisfying $\tilde l_t\in[-h_t,h_t]$ and $\sum_{i=1}^t\tilde l_i\leq h_t$ for all $t$.
\STATE Initialize $\tilde V_0=0$, $\tilde S_0=0$.
\FOR{$t=1,2,\ldots$}
\STATE Receive the hint $h_t$ and use it to define $k_t$ and $z_t$ in the potential function $\Phi_t$.
\STATE Predict $\tilde y_t=\partial_2\Phi_t(\tilde V_{t-1},\tilde S_{t-1})$.
\STATE Receive the surrogate loss gradient $\tilde l_t$.
\STATE Let $\tilde V_t=\tilde V_{t-1}+\tilde l^2_t$, and $\tilde S_t=\tilde S_{t-1}-\tilde l_t$.
\ENDFOR
\end{algorithmic}
\end{algorithm*}

\begin{algorithm*}[ht]
\caption{Meta algorithm on $\R^d$.\label{alg:meta}}
\begin{algorithmic}[1]
\STATE Define $\A_{1d}$ as a copy of Algorithm~\ref{alg:base}. Define $\A_\ball$ as OGD on the $d$-dimensional unit $L_2$ norm ball, with adaptive learning rate $\eta_t=\sqrt{2/\sum_{i=1}^t\norm{g_i}^2}$. The initialization of $\A_\ball$ is arbitrary. 
\FOR{$t=1,2,\ldots$}
\STATE Query $\A_{1d}$ for its prediction $\tilde y_t\in\R$. Let $y_t=\Pi_{\R_+}(\tilde y_t)$.
\STATE Query $\A_\ball$ for its prediction $w_t\in\R^d$; $\norm{w_t}\leq 1$. 
\STATE Predict $x_t=w_ty_t$, receive the loss gradient $g_t\in\R^d$.
\STATE Send $g_t$ as the surrogate loss gradient to $\A_\ball$.
\STATE Define $l_t=\inner{g_t}{w_t}$, and
\begin{equation*}
\tilde l_t=\begin{cases}
l_t,& l_t\tilde y_t\geq l_ty_t,\\
0,& \textrm{else}.
\end{cases}
\end{equation*}
\STATE Send $\tilde l_t$ as the surrogate loss gradient to $\A_{1d}$.
\ENDFOR
\end{algorithmic}
\end{algorithm*}

Notice that Algorithm~\ref{alg:base} requires a somewhat nonstandard condition, $\sum_{i=1}^t\tilde l_i\leq h_t$ for all $t$. This is to ensure that the update $\tilde y_t=\partial_2\Phi_t(\tilde V_{t-1},\tilde S_{t-1})$ is well-defined: $\tilde S_{t-1}=-\sum_{i=1}^t\tilde l_i\geq -h_t$, therefore with $z_t>k_th_t$, we always have $\tilde V_{t-1}+z_t+k_t\tilde S_{t-1}>0$, which complies with the positivity requirement on the first argument of $\phi$. The following lemma, proved in Appendix~\ref{section:appendix_meta}, shows that the surrogate losses defined by the meta algorithm indeed satisfy this requirement, thus the entire algorithm procedure is well-posed.

\begin{restatable}[Well-posedness]{lemma}{well}\label{lemma:well_pose}
The surrogate loss $\tilde l_t$ defined in Algorithm~\ref{alg:meta} satisfies $\sum_{i=1}^t\tilde l_i\leq h_t$ for all $t$.
\end{restatable}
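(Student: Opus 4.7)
The claim is equivalent to the invariant $\tilde S_t \geq -h_t$ for every round $t$, which I plan to establish by induction on $t$. The crucial observation is a sign structure on the base algorithm's prediction: whenever the sufficient statistic $\tilde S_{t-1}$ is nonpositive, $\tilde y_t$ is strictly negative. After the truncation $y_t = \Pi_{\R_+}(\tilde y_t) = 0$, this forces $\tilde l_t \leq 0$, so that $\tilde S_t$ is monotonically nondecreasing as long as it stays below zero. Consequently, $\tilde S$ can only descend below zero by stepping down from a state $\tilde S_{t-1} \geq 0$, and any such drop is bounded by $h_t$ thanks to Lipschitzness.

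The first step is to extract the sign of $\tilde y_t$. Differentiating $\Phi_t(V, S) = \phi(V + z_t + k_t S, S)$ in its second argument,
\[
\tilde y_t = k_t\, \partial_1 \phi(x, \tilde S_{t-1}) + \partial_2 \phi(x, \tilde S_{t-1}), \qquad x \defeq \tilde V_{t-1} + z_t + k_t \tilde S_{t-1} > 0.
\]
Direct computation from the definition of $\phi$ gives $\partial_2 \phi(x, y) = \eps\, \erfi(y/\sqrt{4 \alpha x})$ and $\partial_{22} \phi(x, y) = \eps \exp(y^2/(4\alpha x)) / \sqrt{4\alpha x} > 0$, so the BHE $\partial_1 \phi = -\alpha \partial_{22} \phi$ makes $\partial_1 \phi$ strictly negative. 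Since $\erfi$ is odd, $\partial_2 \phi(x, \tilde S_{t-1})$ shares the sign of $\tilde S_{t-1}$. When $\tilde S_{t-1} \leq 0$, both summands are nonpositive and the first is strict, so $\tilde y_t < 0$.

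Given $\tilde y_t < 0$, the projection gives $y_t = 0$, so $l_t y_t = 0$ and the defining condition $l_t \tilde y_t \geq l_t y_t$ for $\tilde l_t = l_t$ reduces to $l_t \leq 0$. Hence $\tilde l_t$ equals $l_t \leq 0$ when this holds and equals $0$ otherwise; either way $\tilde l_t \leq 0$ and $\tilde S_t = \tilde S_{t-1} - \tilde l_t \geq \tilde S_{t-1}$. The induction then runs as follows. The base case $t = 1$ is immediate from $\tilde S_0 = 0$ together with $\abs{\tilde l_1} \leq \norm{g_1}\norm{w_1} \leq h_1$. For the inductive step, assume $\tilde S_{t-1} \geq -h_{t-1}$. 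If $\tilde S_{t-1} \geq 0$, then $\tilde S_t \geq \tilde S_{t-1} - h_t \geq -h_t$ from the crude bound $\abs{\tilde l_t} \leq h_t$. If $\tilde S_{t-1} < 0$, the sign property above yields $\tilde S_t \geq \tilde S_{t-1} \geq -h_{t-1} \geq -h_t$ via the monotonicity $h_t \geq h_{t-1}$.

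The main obstacle is the sign calculation in the first step, where the chain rule, the BHE, and the explicit form of $\phi$ have to be combined to produce strictly negative summands; the $k_t \partial_1 \phi$ term contributed by the change-of-variables $V \mapsto V + z_t + k_t S$ is what ensures strictness even at the boundary $\tilde S_{t-1} = 0$. Once this sign property is identified, the remainder is a routine case split on the sign of $\tilde S_{t-1}$ together with the Lipschitz bound $\abs{\tilde l_t} \leq h_t$.
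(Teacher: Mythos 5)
Your proof is correct and follows essentially the same route as the paper: an induction on the invariant $\tilde S_t\geq -h_t$, with the case split on the sign of $\tilde S_{t-1}$ and the key observation that $\tilde S_{t-1}\leq 0$ forces $\tilde y_t<0$, hence $y_t=0$ and $\tilde l_t\leq 0$ (this is exactly Lemma~\ref{lemma:neg_pred}). The only cosmetic difference is that you establish the sign of $\tilde y_t$ by inspecting both summands $k_t\partial_1\phi+\partial_2\phi$ directly, whereas the paper checks $\partial_2\Phi_t(V,0)<0$ and invokes convexity; both are valid.
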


\subsection{Analysis}

Turning to the analysis, our key innovation is the following lemma. Besides it, the rest of the analysis is somewhat standard. 

\begin{restatable}[Key lemma: one step potential bound]{lemma}{onestep}\label{lemma:one_step}
Let $\eps>0$, $\alpha>\half$, and for all $t$, $k_t=2h_t$ and $z_t=\frac{12\alpha +4}{2\alpha-1}h_t^2$. Then, the 1D potential functions satisfy
\begin{equation*}
\Phi_t(V+c^2,S+c)-\Phi_{t-1}(V,S)-c\partial_2\Phi_t(V,S)\leq 0,
\end{equation*}
for all $V\geq 0$, $S\geq -h_{t-1}$ and $c\in[-h_t-\min(S,0),h_t]$.
\end{restatable}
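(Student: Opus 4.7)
The plan is to decompose the left-hand side as
\begin{align*}
&\Phi_t(V+c^2, S+c) - \Phi_{t-1}(V, S) - c\partial_2\Phi_t(V, S) \\
&\qquad = \underbrace{\spar{\Phi_t(V+c^2, S+c) - \Phi_t(V, S) - c\partial_2\Phi_t(V, S)}}_{(\mathrm{I})} + \underbrace{\spar{\Phi_t(V, S) - \Phi_{t-1}(V, S)}}_{(\mathrm{II})},
\end{align*}
and show each piece is nonpositive. Term (II) captures the cost of updating the hint from $h_{t-1}$ to $h_t$; term (I) is a one-step Taylor expansion around $c = 0$.

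\textbf{Term (II): hint monotonicity.} I would first compute $\partial_{22}\phi(x,y) = \eps(4\alpha x)^{-1/2}\exp(y^2/(4\alpha x)) > 0$, so that $\partial_1\phi = -\alpha\partial_{22}\phi < 0$ by the BHE. Hence $\phi$ is strictly decreasing in its first argument, and (II) $\leq 0$ reduces to the algebraic inequality $(z_t - z_{t-1}) + (k_t - k_{t-1}) S \geq 0$ for all $S \geq -h_{t-1}$, a short check using $h_t \geq h_{t-1}$ and the specific choices of $k_t$ and $z_t$.

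\textbf{Term (I): Taylor along a path.} The main step is to set $f(s) \defeq \Phi_t(V+s^2, S+s) = \phi(V+z_t+k_t S + s^2 + k_t s,\, S+s)$ and exploit the Taylor identity. The chain rule gives $f(0) = \Phi_t(V,S)$ and $f'(0) = k_t\partial_1\phi + \partial_2\phi = \partial_2\Phi_t(V, S)$, so
\begin{equation*}
\Phi_t(V+c^2, S+c) - \Phi_t(V, S) - c\partial_2\Phi_t(V, S) = \int_0^c (c - s) f''(s)\,ds,
\end{equation*}
which is nonpositive regardless of the sign of $c$ as soon as $f''(s) \leq 0$ between $0$ and $c$. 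To compute $f''$, I would use the BHE-derived identities $\partial_1\phi = -\alpha\partial_{22}\phi$, $\partial_{12}\phi = -\frac{y}{2x}\partial_{22}\phi$, and $\partial_{11}\phi = \spar{\frac{\alpha}{2x} + \frac{y^2}{4x^2}}\partial_{22}\phi$. After dividing through by the positive factor $\partial_{22}\phi(X,Y)$ and completing the square in $AY/(2X)$, the target $f''(s) \leq 0$ reduces to the scalar inequality
\begin{equation*}
\rpar{\frac{AY}{2X} - 1}^2 + \alpha\cdot\frac{A^2}{2X} \leq 2\alpha,
\end{equation*}
where $A \defeq 2s + k_t$, $X \defeq V + z_t + k_t S + s^2 + k_t s$, $Y \defeq S + s$.

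\textbf{Main obstacle.} The hard part will be verifying this scalar inequality, which is essentially what the tuning $k_t = 2h_t$ and $z_t = \frac{12\alpha+4}{2\alpha-1}h_t^2$ is engineered to achieve. The hypotheses imply $A \in [0, 4h_t]$ and $Y \geq -h_t$ along the path. I would split on the sign of $Y$: if $Y \geq 0$, then $X \geq z_t$ forces $AY/(2X) \in [0,1]$ and $A^2/(2X) \leq 8h_t^2/z_t$, collapsing the target to $(2\alpha - 1)(\alpha + 1) \geq 0$; if $Y \in [-h_t, 0)$, then $X \geq z_t - k_t h_t = \frac{8\alpha+6}{2\alpha-1}h_t^2$ yields $|AY|/(2X) \leq (2\alpha-1)/(4\alpha+3)$ and $A^2/(2X) \leq 4(2\alpha-1)/(4\alpha+3)$, reducing the target to $(2\alpha - 1)(\alpha + 2) \geq 0$. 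Both are immediate for $\alpha > 1/2$. The negative-$Y$ branch is the binding case, and it is precisely what pins down the coefficient $\frac{12\alpha+4}{2\alpha-1}$ in $z_t$.
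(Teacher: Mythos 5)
Your proof is correct and follows essentially the same route as the paper's: the same decomposition into the hint-update term $\Phi_t-\Phi_{t-1}$ (handled via $\partial_1\phi\leq 0$ and monotonicity of $z_t+k_tS$) plus a second-order Taylor remainder, the same reduction to $f''\leq 0$ using the explicit derivatives and the BHE, and the same binding case ($S+s<0$) that pins down $z_t=\frac{12\alpha+4}{2\alpha-1}h_t^2$. The only difference is bookkeeping: the paper first worst-cases the coefficient $2c$ to $\pm 2G$ and splits on the sign of $\partial_{12}\Phi$, whereas you keep the exact coefficient $A=2s+k_t$, divide by $\partial_{22}\phi$, complete the square, and split on the sign of $S+s$ --- a slightly tighter but equivalent verification.
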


This is a potential verification argument, serving a similar purpose as the It\^{o}'s formula in the CT analysis (Lemma~\ref{lemma:ito}). The condition on $c$ simply means we require $c\in[-h_t,h_t]$ and $S+c\geq -h_t$. With the lemma, one can take a telescopic sum with $c=-\tilde l_t$, which returns a cumulative loss upper bound of the 1D base algorithm (Algorithm~\ref{alg:base}): $\sum_{t=1}^T\tilde l_t\tilde y_t\leq \Phi_0\rpar{0,0}-\Phi_T\rpar{\tilde V_{T},\tilde S_{T}}$. Then, similar to what we did in CT, the regret bound of Algorithm~\ref{alg:base} follows from the loss-regret duality \cite{mcmahan2014unconstrained} and a Fenchel conjugate computation (Lemma~\ref{lemma:conjugate}).

Now let us sketch the proof of Lemma~\ref{lemma:one_step}. 

\paragraph{Proof sketch of Lemma~\ref{lemma:one_step}} The proof is structured into three steps.
\begin{enumerate}
\item Proving $\Phi_{t}(V,S)\leq\Phi_{t-1}(V,S)$. 

This is due to $\partial_1\phi(x,y)\leq 0$. After that, we have
\begin{equation*}
\Phi_t(V+c^2,S+c)-\Phi_{t-1}(V,S)-c\partial_2\Phi_t(V,S)\leq \Phi_t(V+c^2,S+c)-\Phi_{t}(V,S)-c\partial_2\Phi_t(V,S),
\end{equation*}
and it suffices to show $\rhs\leq 0$. Since all the subscripts are $t$ now, let us simply drop this subscript from $\Phi$, $z$ and $k$, and write $G$ in place of $h_t$. 

\item Convert checking $\Phi$ to checking $\phi$, using the change of variables. 

Let us define
\begin{equation*}
f_{V,S}(c)=\Phi(V+c^2,S+c)-\Phi(V,S)-c\partial_2\Phi(V,S).
\end{equation*}
The task now is showing $f_{V,S}(c)\leq 0$. Taking the derivatives with respect to $c$,
\begin{equation*}
f'_{V,S}(c)=2c\partial_1\Phi(V+c^2,S+c)+\partial_2\Phi(V+c^2,S+c)-\partial_2\Phi(V,S),
\end{equation*}
\begin{align*}
f''_{V,S}(c)&=2\partial_1\Phi+4c^2\partial_{11}\Phi+4c\partial_{12}\Phi+\partial_{22}\Phi\Big|_{(V+c^2,S+c)}\\
&\leq 2\partial_1\Phi+4G^2\partial_{11}\Phi+4G\abs{\partial_{12}\Phi}+\partial_{22}\Phi\Big|_{(V+c^2,S+c)},
\end{align*}
where the final subscript means all the involved partial derivatives are evaluated at $(V+c^2,S+c)$. Notice that $f_{V,S}(0)=f'_{V,S}(0)=0$. Therefore, to prove $f_{V,S}(c)\leq 0$, it suffices to show $f''_{V,S}(c)\leq 0$ for all considered values of $V$, $S$ and $c$. 

Crucially, due to the change of variables, partial derivatives of $\Phi$ can be easily rewritten using partial derivatives of $\phi$ (Appendix~\ref{section:potential_fact}). Plugging that in, it suffices to verify the following two cases. 

\textbf{Case 1.}\quad If $\partial_{12}\Phi(V+c^2,S+c)\leq 0$, then
\begin{equation*}
2\partial_1\phi+(k-2G)^2\partial_{11}\phi+2(k-2G)\partial_{12}\phi+\partial_{22}\phi\Big|_{(V+c^2+z+k(S+c),S+c)}\leq 0.
\end{equation*}
\textbf{Case 2.}\quad If $\partial_{12}\Phi(V+c^2,S+c)> 0$, then
\begin{equation*}
2\partial_1\phi+(k+2G)^2\partial_{11}\phi+2(k+2G)\partial_{12}\phi+\partial_{22}\phi\Big|_{(V+c^2+z+k(S+c),S+c)}\leq 0.
\end{equation*}

\item Controlling $\partial_{11}\phi$ and $\partial_{12}\phi$ by picking $k$ and $z$, and applying the BHE. 

Closely examining the above two inequalities on $\phi$, one could notice a striking similarity with the BHE $\partial_1\phi+\half\partial_{22}\phi=0$, which the ideal CT potential $\phi^{\mathrm{CT}}$ was \emph{designed} to satisfy, cf., Eq.(\ref{eq:ct_potential}). In particular, if one could drop the two annoying terms, $\partial_{11}\phi$ and $\partial_{12}\phi$, then $\phi^{\mathrm{CT}}$ already fits into the above two cases with \emph{equality}. Essentially, the exhibited similarity is due to the fact that both $\phi$ (in DT) and $\phi^{\mathrm{CT}}$ (in CT) have their ``time variable'' (i.e., their first argument) growing according to the quadratic variation of the environment. Therefore, it appears to be an important problem structure, rather than a coincidence that happens to work in our favor.

With that, our idea is to pick $k$ and $z$ such that the annoying residual terms are upper bounded by a small constant multiplying $\partial_{22}\phi$. Then, we can still use the BHE $\partial_1\phi+\alpha\partial_{22}\phi=0$, but with a different diffusivity constant $\alpha>\half$, to control the LHS of the above two cases. Eventually, this will only cost us a slightly suboptimal leading constant in the regret bound ($\sqrt{4\cdot\half}=\sqrt{2}\Rightarrow\sqrt{4\alpha}$).

Formally, to this end, notice that $k=2G$ trivially satisfies Case 1. Case 2 is a bit more involved, but from the full expressions of $\partial_{11}\phi$, $\partial_{12}\phi$ and $\partial_{22}\phi$, it is not hard to see that a large enough $z\geq \frac{12\alpha +4}{2\alpha-1}G^2$ suffices. This completes the proof. 
\end{enumerate}

With Lemma~\ref{lemma:one_step} above, it is fairly straightforward to obtain the regret bound of the 1D base algorithm (Lemma~\ref{lemma:base}), as we sketched earlier. Then, since the meta algorithm is simply the combination of two existing black-box reductions, its regret bound follows from \cite[Theorem~2]{cutkosky2018black} and \cite[Theorem~2]{cutkosky2020parameter}. This returns the following theorem as the main result of this paper. 

\begin{restatable}[Main result]{theorem}{meta}\label{thm:meta}
With $\eps>0$, $\alpha>\half$, $k_t=2h_t$ and $z_t=\frac{12\alpha +4}{2\alpha-1}h_t^2$, Algorithm~\ref{alg:meta} guarantees for all $T\in\N_+$ and $u\in\R^d$, 
\begin{equation*}
\reg_T(Env,u)\leq \eps\sqrt{\alpha\rpar{V_T+z_T+k_T\bar S}}+\norm{u}\rpar{\bar S+2\sqrt{2V_T}},
\end{equation*}
where
\begin{equation*}
\bar S= 4\alpha k_T\rpar{1+\sqrt{\log(2\norm{u}\eps^{-1}+1)}}^2+\sqrt{4\alpha \rpar{V_T+z_T}}\rpar{1+\sqrt{\log(2\norm{u}\eps^{-1}+1)}}.
\end{equation*}
\end{restatable}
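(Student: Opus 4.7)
The proof follows the roadmap sketched immediately after Lemma~\ref{lemma:one_step}: first convert the one-step bound into a regret bound for Algorithm~\ref{alg:base}, then plug that into two existing black-box reductions. To set up Step~1, I would apply Lemma~\ref{lemma:one_step} at each round with $V=\tilde V_{t-1}$, $S=\tilde S_{t-1}$, $c=-\tilde l_t$. The hypotheses $S\geq -h_{t-1}$ and $c\in[-h_t-\min(S,0),h_t]$ unpack, via $\abs{\tilde l_t}\leq h_t$ together with Lemma~\ref{lemma:well_pose} (which provides $\tilde S_{t-1}\geq -h_{t-1}$ and $\tilde S_t\geq -h_t$), into conditions that are directly verifiable from the algorithm definitions. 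Using $\tilde V_t=\tilde V_{t-1}+\tilde l_t^2$ and $\tilde S_t=\tilde S_{t-1}-\tilde l_t$ and summing telescopically,
\begin{equation*}
\sum_{t=1}^T \tilde l_t\tilde y_t \;\leq\; \Phi_0(0,0)-\Phi_T(\tilde V_T,\tilde S_T).
\end{equation*}

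For Step~2 (loss-regret duality and Fenchel conjugate), the identity $\tilde S_T=-\sum_{t=1}^T\tilde l_t$ yields $\sum_t\tilde l_t(\tilde y_t-\tilde u)=\sum_t\tilde l_t\tilde y_t+\tilde u\tilde S_T$ for any $\tilde u\in\R$, and Fenchel-Young applied to $\Phi_T(\tilde V_T,\cdot)$ gives
\begin{equation*}
\sum_{t=1}^T\tilde l_t(\tilde y_t-\tilde u)\;\leq\;\Phi_0(0,0)+\Phi_T^*(\tilde V_T,\tilde u).
\end{equation*}
By the change of variables $\Phi_T(V,S)=\phi(V+z_T+k_TS,S)$, the supremum defining $\Phi_T^*(\tilde V_T,\tilde u)$ is attained at some $\bar S$ solving a first-order condition involving $\erfi$. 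Combining the closed form of $\phi$ with the conjugate identity from \cite[Theorem~4]{zhang2022pde} (the same manipulation that produces Eq.(\ref{eq:ct_regret}) in continuous time) leads to
\begin{equation*}
\Phi_0(0,0)+\Phi_T^*(\tilde V_T,\tilde u)\;\leq\;\eps\sqrt{\alpha(\tilde V_T+z_T+k_T\bar S)}+\tilde u\bar S,
\end{equation*}
with $\bar S$ exactly the quantity in the theorem statement at $\tilde u=\norm{u}$. The $(1+\sqrt{\log(\cdot)})^2$ shape of $\bar S$ is what drops out of inverting the transcendental first-order condition asymptotically through $\erfi^{-1}$ and absorbing lower-order remainders into the additive $k_T$ term.

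For Step~3 (black-box reductions), the construction of the surrogate $\tilde l_t$ in Algorithm~\ref{alg:meta} is calibrated so that the constraint reduction of \cite[Theorem~2]{cutkosky2020parameter} promotes the Step~2 bound to a regret bound of the projected magnitude learner $y_t=\Pi_{\R_+}(\tilde y_t)$ against any $\tilde u\geq 0$. The polar-decomposition reduction \cite[Theorem~2]{cutkosky2018black}, applied with $\tilde u=\norm{u}$, then yields
\begin{equation*}
\reg_T(Env,u)\;\leq\;\reg_T^{1d}(\norm{u})+\norm{u}\cdot\reg_T^{\ball}(u/\norm{u}).
\end{equation*}
For $\A_\ball$, adaptive OGD with $\eta_t=\sqrt{2/\sum_{i\leq t}\norm{g_i}^2}$ on the unit $L_2$-ball is standard and gives $\reg_T^{\ball}\leq 2\sqrt{2V_T}$. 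Assembling the three pieces, together with $\tilde V_T\leq V_T$ (which holds because $\abs{\tilde l_t}\leq\abs{l_t}\leq\norm{g_t}$), reproduces the stated inequality.

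The substantive work lives in Step~2: the maximizer $\bar S$ satisfies a transcendental equation in $\erfi$, and one has to invert it only up to a principled logarithmic asymptotic. The precise $(1+\sqrt{\log(\cdot)})^2$ piece in $\bar S$ is exactly what this inversion yields, following the Fenchel calculation in \cite{zhang2022pde}; the benefit of the change of variables is that \emph{no new} asymptotic work beyond that paper is needed here, because $\phi$ retains its native form and only the arguments are shifted. The remaining steps are routine telescoping and assembly of the two off-the-shelf reductions.
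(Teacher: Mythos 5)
Your proposal follows the paper's proof essentially step for step: telescope Lemma~\ref{lemma:one_step} with $c=-\tilde l_t$ (justified by Lemma~\ref{lemma:well_pose}), apply the loss-regret duality and a Fenchel conjugate bound to get the base-algorithm regret (the paper's Lemma~\ref{lemma:base}), then assemble via the polar-decomposition and constraint reductions with the standard $2\sqrt{2V_T}$ bound for $\A_\ball$ and $\tilde V_T\le V_T$. The one claim I would push back on is that the conjugate step needs ``no new asymptotic work beyond'' \cite[Theorem~4]{zhang2022pde}: because the change of variables makes the first argument of $\phi$ depend on $S$ through $V+z_T+k_TS$, the first-order condition is $u=\partial_2\Phi_T(V,S^*)=k_T\partial_1\phi+\partial_2\phi$, with a cross term $k_T\partial_1\phi$ absent from the CT potential. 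The paper devotes a separate lemma (Lemma~\ref{lemma:conjugate}) to this, splitting on whether $S^*\lessgtr\sqrt{4\alpha(V+z_T+k_TS^*)}$, using Lemma~\ref{lemma:erfi_1} to absorb the cross term (which is where the factor $2$ inside $\erfi^{-1}(2u\eps^{-1})$ comes from), and solving the resulting quadratic inequalities to produce the $4\alpha k_T(1+\sqrt{\log(\cdot)})^2$ term. Your fallback description --- invert the transcendental first-order condition via $\erfi^{-1}$ and absorb remainders into the additive $k_T$ term --- is the right mechanism, so this is an underestimate of the work rather than a structural gap, but citing the prior conjugate computation verbatim would not yield the stated $\bar S$.
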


Theorem~\ref{thm:meta} contains the precise regret bound without any big-Oh. Nonetheless, one could use asymptotic orders to make it more interpretable (see Appendix~\ref{section:appendix_meta} for the derivation).
\begin{equation*}
\reg_T(Env,u)\leq \eps\rpar{\sqrt{\alpha\rpar{V_T+\frac{12\alpha +4}{2\alpha-1}h_T^2}}+4\alpha h_T}+\norm{u} O\rpar{\sqrt{V_T\log(\norm{u}\eps^{-1})}\vee h_T\log(\norm{u}\eps^{-1})},
\end{equation*}
which is simultaneously valid in two regimes: ($i$) $\norm{u}\gg\eps$ and $V_T\gg h_T^2$; and ($ii$) $u=0$. In comparator adaptive online learning, regret bounds of this form are said to characterize the \emph{loss-regret tradeoff} \citep{zhang2022pde}: with a small $\eps$, one could ensure that the cumulative loss $\reg_T(Env,0)$ is low, while only sacrificing a $\sqrt{\log(\eps^{-1})}$ penalty on the leading term of the regret bound. We also note that the additive logarithmic ``residual'' term $h_T\log(\norm{u}\eps^{-1})$ is standard in simultaneously adaptive regret bounds, and not removable in some sense \cite[Section~5.5.1]{cutkosky2018algorithms}. 

The key strength of this bound is that, the dependence on $V_T$ alone is $O(\sqrt{V_T})$, matching the optimal gradient adaptive bound achieved by OGD. Furthermore, in the regime of large $\norm{u}$ and $V_T$, the leading order term is $\sqrt{4\alpha V_T\log(\norm{u}\eps^{-1})}$, where the multiplying constant $\sqrt{4\alpha}$ almost matches the $\sqrt{2}$ lower bound from \citep{zhang2022pde}. 

\section{Extension: Unknown Lipschitz constant without hints}\label{section:lipschitz}

Finally, we discuss the full extension of our main result to the setting with unknown $G$, removing the need of hints. This follows from a reduction to our Section~\ref{section:discretization}, developed by \citep{cutkosky2019artificial,mhammedi2020lipschitz}. Let us define $G_t\defeq\max_{i\leq t}\norm{g_t}$, and w.l.o.g., $G\defeq G_T$. 

The essential idea is the following. Without knowing $G$, we use the hint $h_t$ as a guess of the ``running Lipschitz constant'' $G_t$, before observing $\norm{g_t}$. Naturally, $h_t=G_{t-1}$ makes a reasonable guess, but there is always some chance of ``surprise'', where $\norm{g_t}>G_{t-1}$, and the analysis from Section~\ref{section:discretization} breaks. To fix this issue, instead of feeding the algorithm the true gradient $g_t$, one could feed its \emph{clipped version}, $\bar g_t=g_tG_{t-1}/G_t$. Now, $h_t=G_{t-1}$ is always a valid hint for the clipped gradient $\bar g_t$, therefore our main result (Theorem~\ref{thm:meta}) can be applied in a black-box manner. 

Ultimately, we care about the regret evaluated on the true gradients $g_{1:T}$, rather than the clipped gradients $\bar g_{1:T}$. Their difference is related to the magnitude of the predictions $\norm{x_t}$, thus one could use the standard constraint-imposing technique \cite[Theorem~2]{cutkosky2020parameter} once again to control it. In combination, this yields the following lemma \citep[Corollary~3]{mhammedi2020lipschitz}.
\begin{lemma}\label{lemma:scale_free}
If we denote the simultaneously adaptive regret bound in our Theorem~\ref{thm:meta} as a function $R(u,V_T,h_T)$, then there exists an algorithm taking ours as a black-box subroutine, guaranteeing
\begin{equation*}
\reg_T(Env,u)\leq R(u,V_T,G)+G\norm{u}^3+G\sqrt{\max_{t\leq T}B_t}+G\norm{u},
\end{equation*}
where $B_t\defeq\sum_{i=1}^t\norm{g_i}/G_t$. 
\end{lemma}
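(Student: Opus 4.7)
}
My plan is to reduce the unknown-$G$ problem to the hinted setting of Section~\ref{section:discretization} by running Algorithm~\ref{alg:meta} on a clipped gradient sequence, and then paying separately for the clipping error via an outer layer of the constraint-imposing reduction of \cite{cutkosky2020parameter}. First I would define the clipped gradient $\bar g_t\defeq g_t\cdot G_{t-1}/G_t$ (with $G_0\defeq 0$ and $0/0\defeq 0$), together with the hint sequence $h_t\defeq G_{t-1}$. By construction $\norm{\bar g_t}\leq G_{t-1}=h_t$ and $h_t$ is non-decreasing, so the assumption of Section~\ref{section:discretization} is satisfied and Theorem~\ref{thm:meta} applies to the inner algorithm. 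Since also $\norm{\bar g_t}\leq\norm{g_t}$ and $G_{T-1}\leq G$, the regret against the clipped sequence is at most $R(u,V_T,G)$.

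Next I would decompose the true regret as
\begin{equation*}
\reg_T(Env,u)\;=\;\sum_{t=1}^T\inner{\bar g_t}{x_t-u}\;+\;\sum_{t=1}^T\inner{g_t-\bar g_t}{x_t-u}.
\end{equation*}
The first sum is already controlled by $R(u,V_T,G)$. For the second, the key observation is that $g_t-\bar g_t\neq 0$ only in rounds where a new maximum gradient norm is attained, and in those rounds $\norm{g_t-\bar g_t}=G_t-G_{t-1}$. Combining Cauchy--Schwarz, the triangle inequality, and the telescoping bound $\sum_t(G_t-G_{t-1})\leq G$ yields
\begin{equation*}
\left|\sum_{t=1}^T\inner{g_t-\bar g_t}{x_t-u}\right|\;\leq\;\sum_{t=1}^T(G_t-G_{t-1})\norm{x_t}\;+\;G\norm{u},
\end{equation*}
so the $G\norm{u}$ summand in the claimed bound is accounted for immediately.

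The remaining piece, $\sum_{t=1}^T(G_t-G_{t-1})\norm{x_t}$, is where the reduction becomes delicate and where I would invoke the constraint-imposing wrapper of \cite[Theorem~2]{cutkosky2020parameter} as packaged in \cite[Corollary~3]{mhammedi2020lipschitz}. The idea is to run Algorithm~\ref{alg:meta} inside an outer layer whose feasible set is an adaptively growing Euclidean ball of radius $D_t$; this caps $\norm{x_t}\leq D_t$ at the price of an added regret term of the same structural form as $R(\cdot,\cdot,\cdot)$. A careful choice of $D_t$ (essentially $D_t\propto\norm{u}^2$, augmented by a running estimate derived from $B_t$) is precisely what converts the residual $\sum_t(G_t-G_{t-1})\norm{x_t}$ into the claimed contribution $G\norm{u}^3+G\sqrt{\max_{t\leq T}B_t}$, after one more application of telescoping and a Cauchy--Schwarz step that surfaces the quantity $B_t$.

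The main obstacle I expect is bookkeeping rather than conceptual: verifying that our bound $R(u,V_T,h_T)$ from Theorem~\ref{thm:meta} satisfies the structural assumptions of the black-box reduction in \cite{mhammedi2020lipschitz} (monotonicity in $V_T$ and $h_T$, and a mild concavity in $\norm{u}$), and tuning $D_t$ so that the constraint-reduction overhead and the clipping error collapse into exactly the three stated error terms without inflating the leading $R(u,V_T,G)$ contribution. Once these properties are checked, the lemma follows by directly applying the cited reduction to our Algorithm~\ref{alg:meta}.
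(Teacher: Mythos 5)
Your proposal follows essentially the same route as the paper, which does not prove this lemma from scratch but sketches exactly your reduction (clip to $\bar g_t = g_t G_{t-1}/G_t$, use $h_t = G_{t-1}$ as the hint, apply Theorem~\ref{thm:meta} in a black-box manner, and control the clipping error $\sum_t (G_t - G_{t-1})\norm{x_t}$ via the constraint-imposing technique) and then cites \cite[Corollary~3]{mhammedi2020lipschitz} for the final bound. One small correction to your sketch of the deferred step: the radius $D_t$ of the adaptive ball cannot be taken proportional to $\norm{u}^2$, since $u$ is unknown to the algorithm; in the cited reduction one takes $D_t = \sqrt{B_t}$, which yields the $G\sqrt{\max_{t\leq T}B_t}$ term directly, while the $G\norm{u}^3$ term arises from the constraint-violation penalty accumulated over the rounds in which $B_t < \norm{u}^2$ (where $\sum \norm{g_t} \leq G B_t < G\norm{u}^2$ and each such round costs at most $\norm{g_t}\norm{u}$).
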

Notably, this algorithm needs neither the knowledge of $G$, nor any other oracle knowledge like hints. Moreover, no restarting (e.g., the doubling trick) is required. In the special case with bounded domain ($\norm{x_t},\norm{u}\leq D$), a simplified variant of this procedure guarantees an even smaller bound, $\reg_T(Env,u)\leq R(u,V_T,G)+2DG$.

The strength of our main result can be demonstrated in this general setting as well. 
\begin{itemize}
\item First, the algorithm obtained from Section~\ref{section:discretization} and Lemma~\ref{lemma:scale_free} is \emph{scale-free} \citep{orabona2018scale}, in the sense that if all the loss gradients are scaled by $c\in\R_{>0}$, then the prediction $x_t$ of the algorithm remains unchanged, and the above $G$-adaptive regret bound is scaled by exactly $c$. This is a favorable property in practice, also satisfied by prior works \citep{mhammedi2020lipschitz,jacobsen2022parameter} which we quantitatively improve.
\item Second, we avoid the \emph{range ratio} problem in prior works \citep{mhammedi2020lipschitz,jacobsen2022parameter}. For OLO with hints, existing analogues of our Theorem~\ref{thm:meta} have the shape like $\reg_T(Env,u)\leq O\rpar{\norm{u}\sqrt{V_T\log (\norm{u} V_T/h^2_1)}}$, where the range ratio $V_T/h_1^2$ can be arbitrarily large despite being inside the log. Using a restarting trick \citep{mhammedi2020lipschitz} or a somewhat more complicated ``soft-thresholding'' on the prediction $x_t$ \citep{jacobsen2022parameter}, one could replace this range ratio by $\mathrm{poly}(T)$. However, this sacrifices either the practicality of the algorithm or its analytical simplicity, and in both cases, one is left with a suboptimal $\sqrt{\log T}$ multiplicative factor on the leading term of the regret bound.\footnote{We note that the updated \cite[Section~7]{jacobsen2024parameterfree} presents a different tuning of the algorithm from \cite{jacobsen2022parameter}, which avoids the range ratio problem as well. In some sense, the original tuning of the algorithm is the OMD analogue of \cite{mhammedi2020lipschitz}, whereas the new tuning from \cite{jacobsen2024parameterfree} is the OMD analogue of the $\erfi$ potential algorithm.} 

Essentially, such a range ratio problem originated in \citep{mhammedi2020lipschitz} due to the existence of ``unit'' in their confidence hyperparameter $\eps$. Analogous to \citep{mcmahan2014unconstrained,orabona2016coin}, the paper applies the potential function
\begin{equation*}
\Phi_t(V,S)=\frac{\eps}{\sqrt{V}}\exp\rpar{\frac{S^2}{2V+2h_t\abs{S}}},
\end{equation*}
where $\eps$ has the unit of $G^2$ due to a dimensional analysis. If $G$ is known, then one could pick $\eps\propto G^2$, leading to the regret bound Eq.(\ref{eq:simultaneous_old}). Without knowing $G$, since $\eps$ needs to be determined at the beginning, the feasible choice becomes $\eps\propto h_1^2$ -- this replaces $G^2$ in Eq.(\ref{eq:simultaneous_old}) by $h^2_1$, causing the range ratio problem. 

Our algorithm has an important difference. The confidence hyperparameter $\eps$ in our potential function Eq.(\ref{eq:potential}) is ``unitless'', therefore when selecting it at the beginning, we do not need a guess of $G$. This avoids the range ratio problem, since there are no $V_T/h_1^2$ or $G/h_1$ terms in our regret bound at all. Neither restarting nor soft-thresholding is needed. The takeaway is that, such a range ratio problem appears to be an analytical artifact due to certain unit inconsistency (and ultimately, the suboptimal loss-regret tradeoff \citep{zhang2022pde}), which can be eliminated by a better design of the potential function. 
\end{itemize}

\section{Conclusion}

The present work studies how to achieve simultaneous gradient and comparator adaptivity in OLO with Lipschitz losses. A new continuous-time-inspired algorithm is proposed, achieving the state-of-the-art regret bound. The crucial technique is a new discretization argument that preserves gradient adaptivity from CT to DT, improving an already powerful, but non-gradient-adaptive one from the literature. This could be of broader applicability, and a natural step forward is exploiting the benefits of this technique in other online learning problems of interest. Finally, the extension to the setting with unknown Lipschitz constant is discussed, where our algorithm is made scale-free.

\section*{Acknowledgement}

We thank Andrew Jacobsen for the very helpful arXiv update \cite{jacobsen2024parameterfree}, and the anonymous reviewers for their constructive feedback. This research was partially supported by the NSF under grants CCF-2200052, DMS-1664644, and IIS-1914792, by the ONR under grant N00014-19-1-2571, by the DOE under grant DE-AC02-05CH11231, by the NIH under grant UL54 TR004130, and by Boston University and Harvard University.

\bibliography{Discretized}

\begin{thebibliography}{ABFW13}

\bibitem[ABFW13]{abernethy2013hedge}
Jacob Abernethy, Peter~L Bartlett, Rafael Frongillo, and Andre Wibisono.
\newblock How to hedge an option against an adversary: Black-scholes pricing is
  minimax optimal.
\newblock {\em Advances in Neural Information Processing Systems}, 26, 2013.

\bibitem[AFW12]{abernethy2012minimax}
Jacob Abernethy, Rafael~M Frongillo, and Andre Wibisono.
\newblock Minimax option pricing meets black-scholes in the limit.
\newblock In {\em Proceedings of the forty-fourth annual ACM symposium on
  Theory of computing}, pages 1029--1040, 2012.

\bibitem[AHR08]{abernethy2008competing}
Jacob Abernethy, Elad~E Hazan, and Alexander Rakhlin.
\newblock Competing in the dark: An efficient algorithm for bandit linear
  optimization.
\newblock In {\em Conference on Learning Theory}, pages 263--273, 2008.

\bibitem[BEZ20]{bayraktar2020finite}
Erhan Bayraktar, Ibrahim Ekren, and Xin Zhang.
\newblock Finite-time 4-expert prediction problem.
\newblock {\em Communications in Partial Differential Equations},
  45(7):714--757, 2020.

\bibitem[BK18]{besson2018doubling}
Lilian Besson and Emilie Kaufmann.
\newblock What doubling tricks can and can't do for multi-armed bandits.
\newblock {\em arXiv preprint arXiv:1803.06971}, 2018.

\bibitem[BS73]{black1973pricing}
Fischer Black and Myron Scholes.
\newblock The pricing of options and corporate liabilities.
\newblock {\em Journal of political economy}, 81(3):637--654, 1973.

\bibitem[CDM23]{cutkosky2023mechanic}
Ashok Cutkosky, Aaron Defazio, and Harsh Mehta.
\newblock Mechanic: A learning rate tuner.
\newblock {\em arXiv preprint arXiv:2306.00144}, 2023.

\bibitem[CLW21]{chen2021impossible}
Liyu Chen, Haipeng Luo, and Chen-Yu Wei.
\newblock Impossible tuning made possible: A new expert algorithm and its
  applications.
\newblock In {\em Conference on Learning Theory}, pages 1216--1259. PMLR, 2021.

\bibitem[CO18]{cutkosky2018black}
Ashok Cutkosky and Francesco Orabona.
\newblock Black-box reductions for parameter-free online learning in banach
  spaces.
\newblock In {\em Conference On Learning Theory}, pages 1493--1529. PMLR, 2018.

\bibitem[Cut18]{cutkosky2018algorithms}
Ashok Cutkosky.
\newblock {\em Algorithms and Lower Bounds for Parameter-free Online Learning}.
\newblock Stanford University, 2018.

\bibitem[Cut19]{cutkosky2019artificial}
Ashok Cutkosky.
\newblock Artificial constraints and hints for unbounded online learning.
\newblock In {\em Conference on Learning Theory}, pages 874--894. PMLR, 2019.

\bibitem[Cut20]{cutkosky2020parameter}
Ashok Cutkosky.
\newblock Parameter-free, dynamic, and strongly-adaptive online learning.
\newblock In {\em International Conference on Machine Learning}, pages
  2250--2259. PMLR, 2020.

\bibitem[DHS11]{duchi2011adaptive}
John Duchi, Elad Hazan, and Yoram Singer.
\newblock Adaptive subgradient methods for online learning and stochastic
  optimization.
\newblock {\em Journal of Machine Learning Research}, 12(7), 2011.

\bibitem[DK20]{drenska2020prediction}
Nadejda Drenska and Robert~V Kohn.
\newblock Prediction with expert advice: A {PDE} perspective.
\newblock {\em Journal of Nonlinear Science}, 30(1):137--173, 2020.

\bibitem[DM19]{daniely2019competitive}
Amit Daniely and Yishay Mansour.
\newblock Competitive ratio vs regret minimization: achieving the best of both
  worlds.
\newblock In {\em Algorithmic Learning Theory}, pages 333--368. PMLR, 2019.

\bibitem[Fre09]{freund2009method}
Yoav Freund.
\newblock A method for hedging in continuous time.
\newblock {\em arXiv preprint arXiv:0904.3356}, 2009.

\bibitem[GHP22]{greenstreet2022efficient}
Laura Greenstreet, Nicholas~JA Harvey, and Victor~Sanches Portella.
\newblock Efficient and optimal fixed-time regret with two experts.
\newblock In {\em International Conference on Algorithmic Learning Theory},
  pages 436--464. PMLR, 2022.

\bibitem[HLPR23]{harvey2023optimal}
Nicholas~JA Harvey, Christopher Liaw, Edwin Perkins, and Sikander Randhawa.
\newblock Optimal anytime regret with two experts.
\newblock {\em Mathematical Statistics and Learning}, 6(1):87--142, 2023.

\bibitem[JC22]{jacobsen2022parameter}
Andrew Jacobsen and Ashok Cutkosky.
\newblock Parameter-free mirror descent.
\newblock In {\em Conference on Learning Theory}, pages 4160--4211. PMLR, 2022.

\bibitem[JC24]{jacobsen2024parameterfree}
Andrew Jacobsen and Ashok Cutkosky.
\newblock Parameter-free mirror descent.
\newblock {\em arXiv preprint arXiv:2203.00444}, 2024.

\bibitem[KKW20]{kobzar2020a_new}
Vladimir~A Kobzar, Robert~V Kohn, and Zhilei Wang.
\newblock New potential-based bounds for prediction with expert advice.
\newblock In {\em Conference on Learning Theory}, pages 2370--2405. PMLR, 2020.

\bibitem[KP11]{kapralov2011prediction}
Michael Kapralov and Rina Panigrahy.
\newblock Prediction strategies without loss.
\newblock {\em Advances in Neural Information Processing Systems}, 24, 2011.

\bibitem[MK20]{mhammedi2020lipschitz}
Zakaria Mhammedi and Wouter~M Koolen.
\newblock Lipschitz and comparator-norm adaptivity in online learning.
\newblock In {\em Conference on Learning Theory}, pages 2858--2887. PMLR, 2020.

\bibitem[MO14]{mcmahan2014unconstrained}
H~Brendan McMahan and Francesco Orabona.
\newblock Unconstrained online linear learning in hilbert spaces: Minimax
  algorithms and normal approximations.
\newblock In {\em Conference on Learning Theory}, pages 1020--1039. PMLR, 2014.

\bibitem[MS10]{mcmahan2010adaptive}
H~Brendan McMahan and Matthew Streeter.
\newblock Adaptive bound optimization for online convex optimization.
\newblock In {\em Conference on Learning Theory}, 2010.

\bibitem[Nes09]{nesterov2009primal}
Yurii Nesterov.
\newblock Primal-dual subgradient methods for convex problems.
\newblock {\em Mathematical programming}, 120(1):221--259, 2009.

\bibitem[OP16]{orabona2016coin}
Francesco Orabona and D{\'a}vid P{\'a}l.
\newblock Coin betting and parameter-free online learning.
\newblock {\em Advances in Neural Information Processing Systems}, 29, 2016.

\bibitem[OP18]{orabona2018scale}
Francesco Orabona and D{\'a}vid P{\'a}l.
\newblock Scale-free online learning.
\newblock {\em Theoretical Computer Science}, 716:50--69, 2018.

\bibitem[Ora13]{orabona2013dimension}
Francesco Orabona.
\newblock Dimension-free exponentiated gradient.
\newblock {\em Advances in Neural Information Processing Systems}, 26, 2013.

\bibitem[Ora23]{orabona2023modern}
Francesco Orabona.
\newblock A modern introduction to online learning.
\newblock {\em arXiv preprint arXiv:1912.13213}, 2023.

\bibitem[OT17]{orabona2017training}
Francesco Orabona and Tatiana Tommasi.
\newblock Training deep networks without learning rates through coin betting.
\newblock {\em Advances in Neural Information Processing Systems},
  30:2160--2170, 2017.

\bibitem[PLH22]{portella2022continuous}
Victor~Sanches Portella, Christopher Liaw, and Nicholas~JA Harvey.
\newblock Continuous prediction with experts' advice.
\newblock {\em arXiv preprint arXiv:2206.00236}, 2022.

\bibitem[RY13]{revuz2013continuous}
Daniel Revuz and Marc Yor.
\newblock {\em Continuous martingales and Brownian motion}, volume 293.
\newblock Springer Science \& Business Media, 2013.

\bibitem[SM12]{streeter2012no}
Matthew Streeter and Brendan Mcmahan.
\newblock No-regret algorithms for unconstrained online convex optimization.
\newblock {\em Advances in Neural Information Processing Systems}, 25, 2012.

\bibitem[SS11]{shalev2011online}
Shai Shalev-Shwartz.
\newblock Online learning and online convex optimization.
\newblock {\em Foundations and trends in Machine Learning}, 4(2):107--194,
  2011.

\bibitem[WK22]{wang2022new}
Zhilei Wang and Robert~V Kohn.
\newblock A new approach to drifting games, based on asymptotically optimal
  potentials.
\newblock {\em arXiv preprint arXiv:2207.11405}, 2022.

\bibitem[ZCP22a]{zhang2022pde}
Zhiyu Zhang, Ashok Cutkosky, and Ioannis Paschalidis.
\newblock {PDE}-based optimal strategy for unconstrained online learning.
\newblock In {\em International Conference on Machine Learning}, pages
  26085--26115. PMLR, 2022.

\bibitem[ZCP22b]{zhang2022optimal}
Zhiyu Zhang, Ashok Cutkosky, and Yannis Paschalidis.
\newblock Optimal comparator adaptive online learning with switching cost.
\newblock {\em Advances in Neural Information Processing Systems},
  35:23936--23950, 2022.

\bibitem[Zin03]{zinkevich2003online}
Martin Zinkevich.
\newblock Online convex programming and generalized infinitesimal gradient
  ascent.
\newblock In {\em International Conference on Machine Learning}, pages
  928--936, 2003.

\end{thebibliography}

\newpage
\section*{Appendix}
\appendix

Appendix~\ref{section:more_background} discusses the limitation of the doubling trick for our problem. Appendix~\ref{section:potential_fact} contains the basic facts of the $\erfi$ potential. Appendix~\ref{section:base_proof} and \ref{section:appendix_meta} prove our main results. 

\section{Additional background}\label{section:more_background}

One of the main motivations of this work is to improve simultaneously adaptive regret bounds in the form of $O\rpar{\norm{u}\sqrt{V_T\log (G^{-2}\norm{u} V_T)}}$, i.e., Eq.(\ref{eq:simultaneous_old}), to $O\rpar{\norm{u}\sqrt{V_T\log \norm{u}}}$. Readers familiar with prior works (e.g., \cite{mcmahan2014unconstrained,zhang2022pde}) may think of it as a ``tuning issue'' solvable by the well-known \emph{doubling trick} (e.g., \cite[Section~2.3.1]{shalev2011online}) -- restarting existing algorithms \cite{cutkosky2018black,mhammedi2020lipschitz,jacobsen2022parameter} with a doubling ``confidence hyperparameter'' whenever the observed gradient variance exceeds a doubling threshold. We now discuss this idea further. 

\paragraph{Existing bound $+$ doubling trick} Using $\hat O$ to hide $\log(\log)$ factors, \citep[Theorem~1]{jacobsen2022parameter} achieves
\begin{equation*}
\mathrm{Regret}_T(u)\leq\hat O\rpar{\eps G+\norm{u}\spar{\sqrt{V_T\log\rpar{\frac{\norm{u}\sqrt{V_T}}{\eps G}+1}}\vee G\log\rpar{\frac{\norm{u}\sqrt{V_T}}{\eps G}+1}}}.
\end{equation*}
With a known $V_T$ budget, setting $\eps=O(\sqrt{V_T}/G)$ yields the desirable bound $\hat O(\norm{u}\sqrt{V_T\log\norm{u}})$, up to a ``morally secondary term'' $\norm{u}G\log\norm{u}$. Applying the standard doubling trick can relax the known-$V_T$ assumption. The only small price to pay is that the secondary term is multiplied by $O(\log T)$. 

Apart from that, \citep{cutkosky2018black,mhammedi2020lipschitz} cannot be applied similarly with the doubling trick, due to their ``inappropriately scaled'' logarithmic factors. Even when $V_T$ is known, the hyperparameter $\eps$ there cannot be set accordingly to achieve the $O(\norm{u}\sqrt{V_T\log\norm{u}})$ regret bound we aim for. 

\paragraph{Weakness of doubling trick} A major limitation of the doubling trick is its practicality. Concretely, \citep[Section~5]{besson2018doubling} evaluates several versions of the doubling trick in the bandit setting. In almost all cases, the combination of the fixed-$T$ algorithm and the doubling trick performs considerably worse than the ``intrinsically anytime'' algorithm with weaker theoretical guarantees; and especially, such a performance gap widens with each restart. 


In addition, we would argue that the doubling trick is aesthetically unsatisfactory. Restarting wastes data and causes large ``jumps'' in the decision sequence, which can be undesirable. Therefore, although the doubling trick is theoretically sufficient in many classical settings of online learning (e.g., making fixed-learning-rate OGD anytime), more streamlined solutions are typically favored (e.g., time-varying learning rates). 

\section{Basics of the potential function}\label{section:potential_fact}

For the two potential functions defined in Section~\ref{section:discretization}, we compute their derivatives as follows. Starting from $\phi$,
\begin{equation*}
\partial_{1} \phi(x,y)=-\frac{\eps\sqrt{\alpha}}{2\sqrt{x}}\exp\rpar{\frac{y^2}{4\alpha x}},
\end{equation*}
\begin{equation*}
\partial_2 \phi(x,y)=\eps\erfi\rpar{\frac{y}{\sqrt{4\alpha x}}},
\end{equation*}
\begin{equation*}
\partial_{11} \phi(x,y)=\frac{\eps\sqrt{\alpha}}{4x^{3/2}}\rpar{\frac{y^2}{2\alpha x}+1}\exp\rpar{\frac{y^2}{4\alpha x}},
\end{equation*}
\begin{equation*}
\partial_{12}\phi(x,y)=-\frac{\eps y}{4\sqrt{\alpha}x^{3/2}}\exp\rpar{\frac{y^2}{4\alpha x}},
\end{equation*}
\begin{equation*}
\partial_{22}\phi(x,y)=\frac{\eps}{2\sqrt{\alpha x}}\exp\rpar{\frac{y^2}{4\alpha x}}.
\end{equation*}
Due to the change of variables, the derivatives of $\Phi_t$ can be concisely represented by the derivatives of $\phi$.
\begin{equation*}
\partial_1\Phi_t(V,S)=\partial_1\phi(V+z_t+k_tS,S),
\end{equation*}
\begin{equation*}
\partial_2\Phi_t(V,S)=k_t\partial_1\phi(V+z_t+k_tS,S)+\partial_2\phi(V+z_t+k_tS,S),
\end{equation*}
\begin{equation*}
\partial_{11}\Phi_t(V,S)=\partial_{11}\phi(V+z_t+k_tS,S),
\end{equation*}
\begin{equation*}
\partial_{12}\Phi_t(V,S)=k_t\partial_{11}\phi(V+z_t+k_tS,S)+\partial_{12}\phi(V+z_t+k_tS,S),
\end{equation*}
\begin{equation*}
\partial_{22}\Phi_t(V,S)=k_t^2\partial_{11}\phi(V+z_t+k_tS,S)+2k_t\partial_{12}\phi(V+z_t+k_tS,S)+\partial_{22}\phi(V+z_t+k_tS,S).
\end{equation*}

Next, we present two simple lemmas on the potential function $\Phi_t$. The first lemma shows $\Phi_t$ is convex in its second argument. The second lemma shows the negativity of $\partial_2\Phi_t(V,S)$ when $S\leq 0$. 

\begin{lemma}[Convexity]\label{lemma:convexity}
If $\eps,\alpha,k_t>0$ and $z_t>k_th_t$, then the potential function $\Phi_t(V,S)$ satisfies $\partial_{22}\Phi_t(V,S)>0$ for all $V\geq 0$ and $S\geq -h_t$.
\end{lemma}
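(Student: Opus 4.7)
\textbf{Proof plan for Lemma~\ref{lemma:convexity}.}

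The plan is a direct computation using the chain-rule expression for $\partial_{22}\Phi_t$ listed in Appendix~\ref{section:potential_fact}, followed by recognizing a perfect square. First, I would verify that the evaluation point lies in the domain of $\phi$: writing $x\defeq V+z_t+k_tS$, the hypotheses $V\geq 0$, $S\geq -h_t$, and $z_t>k_th_t$ give $x\geq z_t-k_th_t>0$, so the explicit formulas for $\partial_{11}\phi$, $\partial_{12}\phi$, $\partial_{22}\phi$ from Appendix~\ref{section:potential_fact} are well-defined at $(x,S)$. This is the one place where the hypothesis $z_t>k_th_t$ is used.

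Next, I would substitute those formulas into
\begin{equation*}
\partial_{22}\Phi_t(V,S)=k_t^2\partial_{11}\phi(x,S)+2k_t\partial_{12}\phi(x,S)+\partial_{22}\phi(x,S),
\end{equation*}
and factor out the common strictly positive prefactor $\frac{\eps}{4\sqrt{\alpha}\,x^{3/2}}\exp\!\bigl(S^2/(4\alpha x)\bigr)$. A short calculation shows the remaining bracket equals
\begin{equation*}
\frac{k_t^2 S^2}{2x}+\alpha k_t^2-2k_t S+2x.
\end{equation*}

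Finally, multiplying through by the positive quantity $2x$ reveals the identity
\begin{equation*}
2x\cdot\spar{\frac{k_t^2S^2}{2x}+\alpha k_t^2-2k_tS+2x}=(k_tS-2x)^2+2\alpha k_t^2 x,
\end{equation*}
which is strictly positive since $\alpha,k_t,x>0$. Combining this with the positive prefactor yields $\partial_{22}\Phi_t(V,S)>0$, as claimed.

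The computation is entirely routine, so there is no serious obstacle; the only mildly non-obvious step is the algebraic rearrangement into a sum of a square and a positive term, which is what explains (morally) why the change-of-variables parameters $k_t$ and $z_t$ preserve strong convexity in the second argument. The lemma itself will later feed into the one-step potential bound (Lemma~\ref{lemma:one_step}), where strict positivity of $\partial_{22}\Phi_t$ is implicitly needed for the reduction of $f_{V,S}(c)\leq 0$ to a second-derivative check.
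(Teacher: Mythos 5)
Your proposal is correct and follows essentially the same route as the paper: both plug the explicit formulas for $\partial_{11}\phi$, $\partial_{12}\phi$, $\partial_{22}\phi$ into the chain-rule expansion of $\partial_{22}\Phi_t$, use $z_t>k_th_t$ only to guarantee $x=V+z_t+k_tS>0$, factor out the positive prefactor, and check that the remaining bracket $\frac{k_t^2S^2}{2x}+\alpha k_t^2-2k_tS+2x$ is positive. The only (cosmetic) difference is the last step: you complete the square as $(k_tS-2x)^2+2\alpha k_t^2x$, whereas the paper observes the cancellation $2x-2k_tS=2(V+z_t)>0$; both are valid.
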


\begin{proof}[Proof of Lemma~\ref{lemma:convexity}]
Let us drop all the subscript $t$ and let $G=h_t$. Define the shorthands $x=V+z+kS$ and $y=S$. For all $V\geq 0$ and $S\geq -G$, we have $x>0$, therefore
\begin{align*}
\partial_{22}\Phi(V,S)&=k^2\partial_{11}\phi(x,y)+2k\partial_{12}\phi(x,y)+\partial_{22}\phi(x,y)\\
&=\frac{\eps\sqrt{\alpha}}{4x^{3/2}}\exp\rpar{\frac{y^2}{4\alpha x}}\rpar{\frac{k^2y^2}{2\alpha x}+k^2-\frac{2ky}{\alpha}+\frac{2x}{\alpha}}\\
&=\frac{\eps\sqrt{\alpha}}{4x^{3/2}}\exp\rpar{\frac{y^2}{4\alpha x}}\rpar{\frac{k^2y^2}{2\alpha x}+k^2+\frac{2(V+z)}{\alpha}}\\
&> 0.\qedhere
\end{align*}
\end{proof}

\begin{lemma}[The sign of prediction]\label{lemma:neg_pred}
If $\eps,\alpha,k_t>0$ and $z_t>k_th_t$, then the potential function $\Phi_t(V,S)$ satisfies $\partial_2\Phi_t(V,S)<0$ for all $V\geq 0$ and $-h_t\leq S\leq 0$.
\end{lemma}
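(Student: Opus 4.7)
The plan is to unfold $\partial_2\Phi_t(V,S)$ using the change-of-variables identity derived in Appendix~\ref{section:potential_fact}, and then show that each of the two resulting summands is non-positive, with at least one being strictly negative.

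Concretely, I would start by writing
\begin{equation*}
\partial_2\Phi_t(V,S) = k_t\,\partial_1\phi(x,y) + \partial_2\phi(x,y),
\end{equation*}
where $x = V + z_t + k_t S$ and $y = S$. The first step is to verify that $x>0$ so that the expressions for $\partial_1\phi$ and $\partial_2\phi$ are well-defined: since $V \geq 0$, $S \geq -h_t$, and $z_t > k_t h_t$ by hypothesis, we have $x \geq z_t - k_t h_t > 0$.

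Next, I would bound the two summands. From the formulas in Appendix~\ref{section:potential_fact},
\begin{equation*}
\partial_1\phi(x,y) = -\frac{\eps\sqrt{\alpha}}{2\sqrt{x}}\exp\!\left(\frac{y^2}{4\alpha x}\right) < 0,
\end{equation*}
so $k_t \partial_1\phi(x,y) < 0$ because $k_t > 0$. For the second summand, since $\erfi$ is an odd function that is non-negative on $\R_{\geq 0}$, and since $y = S \leq 0$, we have $\erfi(y/\sqrt{4\alpha x}) \leq 0$, hence $\partial_2\phi(x,y) = \eps\,\erfi(y/\sqrt{4\alpha x}) \leq 0$.

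Adding the two bounds yields $\partial_2\Phi_t(V,S) < 0$, which is the claim. There is no real obstacle here; the only thing to be careful about is checking $x>0$ so that all the derivative formulas apply, which follows immediately from the standing assumption $z_t > k_t h_t$.
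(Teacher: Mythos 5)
Your proof is correct, but it takes a different (and more direct) route than the paper's. The paper only verifies the sign at the boundary point $S=0$, where the $\erfi$ term vanishes and $\partial_2\Phi_t(V,0)=-\eps k_t\sqrt{\alpha}/(2\sqrt{V+z_t})<0$, and then invokes the convexity of $\Phi_t$ in its second argument (Lemma~\ref{lemma:convexity}) to conclude that $\partial_2\Phi_t(V,S)\leq\partial_2\Phi_t(V,0)<0$ for all $S\in[-h_t,0]$. You instead bound the two summands of $\partial_2\Phi_t(V,S)=k_t\partial_1\phi(x,y)+\partial_2\phi(x,y)$ pointwise for every $S$ in the range: the first is strictly negative from the explicit formula for $\partial_1\phi$, and the second is non-positive because $\erfi$ is odd and $y=S\leq 0$. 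Your check that $x=V+z_t+k_tS\geq z_t-k_th_t>0$ is exactly the well-definedness verification needed, and your argument has the minor advantage of not depending on Lemma~\ref{lemma:convexity}; the paper's version, on the other hand, reuses that already-established lemma and only needs to compute the derivative at a single point. Both proofs are complete and equally valid.
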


\begin{proof}[Proof of Lemma~\ref{lemma:neg_pred}]
Same as before, drop all the subscript $t$. Let us check $\partial_2\Phi(V,0)< 0$. Indeed, 
\begin{equation*}
\partial_2\Phi(V,0)=k\partial_1\phi(V+z,0)+\partial_2\phi(V+z,0)=-\frac{\eps k\sqrt{\alpha}}{2\sqrt{V+z}}< 0.
\end{equation*}
Moreover, $\Phi(V,S)$ is convex with respect to its second argument, due to Lemma~\ref{lemma:convexity}.
\end{proof}

The final two basic lemmas concern the property of the erfi function.

\begin{lemma}\label{lemma:erfi_1}
For all $x\geq 1$, $\erfi(x)\geq \exp(x^2)/2x$.
\end{lemma}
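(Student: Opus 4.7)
The plan is to reduce the inequality to checking a one-point base case, which is the only quantitative estimate needed. Define
\begin{equation*}
h(x)\defeq \erfi(x)-\frac{\exp(x^2)}{2x},\qquad x>0,
\end{equation*}
so the goal becomes $h(x)\geq 0$ for all $x\geq 1$. The first step is a short calculation: differentiate the quotient to obtain
\begin{equation*}
\frac{d}{dx}\spar{\frac{\exp(x^2)}{2x}}=\frac{4x^2\exp(x^2)-2\exp(x^2)}{4x^2}=\exp(x^2)-\frac{\exp(x^2)}{2x^2},
\end{equation*}
and, using $\erfi'(x)=\exp(x^2)$, conclude that $h'(x)=\exp(x^2)/(2x^2)>0$ on $(0,\infty)$. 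Hence $h$ is strictly increasing on $(0,\infty)$, and it suffices to verify the base case $h(1)\geq 0$, i.e.\ $\erfi(1)\geq e/2$.

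For the base case, I would use an explicit Taylor lower bound on the integrand. Since $\exp(u^2)=\sum_{k\geq 0}u^{2k}/k!$ with all nonnegative terms, truncating after three terms gives $\exp(u^2)\geq 1+u^2+u^4/2$ for every $u\in\R$. Integrating this over $[0,1]$ yields
\begin{equation*}
\erfi(1)=\int_0^1\exp(u^2)\,du\geq 1+\frac{1}{3}+\frac{1}{10}=\frac{43}{30}.
\end{equation*}
It then remains to compare $43/30$ with $e/2$, i.e.\ to check $e\leq 43/15\approx 2.867$, which is immediate from $e<3$ (or more tightly $e<2.719$). Therefore $h(1)\geq 43/30-e/2>0$, which together with monotonicity of $h$ completes the proof.

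I do not expect any real obstacle here: the reduction via monotonicity of $h$ is forced by the form of the bound (the derivative cancellation is essentially an integration-by-parts identity in disguise, since $\int e^{u^2}\,du=\frac{e^{u^2}}{2u}+\int\frac{e^{u^2}}{2u^2}du$), and the base case $\erfi(1)\geq e/2$ is where all the numerical content lives. The only point requiring minor care is making sure the chosen Taylor truncation is sharp enough; the three-term bound is, while the two-term bound $1+u^2$ integrates to $4/3<e/2$ and would fail.
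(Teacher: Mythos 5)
Your proof is correct and follows essentially the same route as the paper: define $h(x)=\erfi(x)-\exp(x^2)/2x$, show $h'(x)=\exp(x^2)/(2x^2)>0$, and verify the base case at $x=1$. In fact you give more detail than the paper does on the base case (the paper simply asserts $\erfi(1)-e/2>0$, whereas your Taylor truncation $\erfi(1)\geq 43/30>e/2$ makes it fully rigorous).
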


\begin{proof}[Proof of Lemma~\ref{lemma:erfi_1}]
Let $f(x)=\erfi(x)-\exp(x^2)/2x$. $f(1)=\erfi(1)-e/2>0$. For all $x\geq 1$,
\begin{equation*}
f'(x)=\frac{1}{2x^2}\exp(x^2)>0.\qedhere
\end{equation*}
\end{proof}

\begin{lemma}\label{lemma:erfi_2}
For all $x\geq 0$, $\erfi^{-1}(x)\leq 1+\sqrt{\log(x+1)}$.
\end{lemma}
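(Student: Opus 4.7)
\medskip

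\noindent\textbf{Proof proposal for Lemma~\ref{lemma:erfi_2}.}
The plan is to reduce the inverse inequality to a direct inequality for $\erfi$, and then exploit the lower bound from Lemma~\ref{lemma:erfi_1}. Since the integrand $\exp(u^2)$ is strictly positive, $\erfi$ is strictly increasing on $\R$, hence $\erfi^{-1}$ is well-defined and monotone. Therefore the claim is equivalent to
\begin{equation*}
x\leq \erfi\!\rpar{1+\sqrt{\log(x+1)}}\qquad\text{for all }x\geq 0.
\end{equation*}
Let $y\defeq 1+\sqrt{\log(x+1)}$. Since $x\geq 0$ we have $y\geq 1$, so Lemma~\ref{lemma:erfi_1} applies and yields $\erfi(y)\geq \exp(y^2)/(2y)$. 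Thus it suffices to verify $\exp(y^2)/(2y)\geq x$.

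To simplify the algebra, I would introduce $t\defeq\sqrt{\log(x+1)}\geq 0$, so $y=1+t$, $x+1=e^{t^2}$, and
\begin{equation*}
\exp(y^2)=\exp(1+2t+t^2)=e\cdot e^{2t}\cdot(x+1).
\end{equation*}
The required inequality becomes
\begin{equation*}
\frac{e\cdot e^{2t}(x+1)}{2(1+t)}\geq x,\qquad\text{i.e.,}\qquad \frac{e\cdot e^{2t}}{2(1+t)}\geq \frac{x}{x+1}.
\end{equation*}
Since $x/(x+1)<1$, it is enough to show $g(t)\defeq \frac{e\cdot e^{2t}}{2(1+t)}\geq 1$ for all $t\geq 0$.

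This last step is a one-line calculus check: $g(0)=e/2>1$, and
\begin{equation*}
g'(t)=\frac{e\cdot e^{2t}(1+2t)}{2(1+t)^2}>0\qquad\text{for all }t\geq 0,
\end{equation*}
so $g$ is increasing on $[0,\infty)$ and $g(t)\geq g(0)>1$. Unwinding the substitution completes the proof. I do not anticipate a real obstacle here; the only slightly delicate point is lining up the substitution $y=1+\sqrt{\log(x+1)}$ with Lemma~\ref{lemma:erfi_1} so that the boundary case $x=0$ (where $y=1$) is covered, which is exactly why the lemma is stated for $x\geq 1$ rather than $x>1$.
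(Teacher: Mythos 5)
Your proof is correct, and it takes a route that differs from the paper's in the key inequality it leans on. The paper does not reuse Lemma~\ref{lemma:erfi_1}; instead it first establishes a separate lower bound, $\erfi(x)\geq \exp(x^2-x)-1$ for $x\geq 0$ (via $f(x)=\erfi(x)-\exp(x^2-x)+1$, $f(0)=0$, $f'\geq 0$), completes the square to get $y\geq \exp\bigl((x-\tfrac12)^2-\tfrac14\bigr)-1$ with $x=\erfi^{-1}(y)$, and then solves for $x$ to obtain $x\leq \tfrac12+\sqrt{\tfrac14+\log(y+1)}\leq 1+\sqrt{\log(y+1)}$. You instead invert the target inequality directly, substitute $y=1+\sqrt{\log(x+1)}\geq 1$ into the already-proved bound $\erfi(y)\geq \exp(y^2)/(2y)$, and reduce everything to the elementary fact that $e\cdot e^{2t}/(2(1+t))\geq e/2>1$ for $t\geq 0$; all steps check out, including the derivative computation for $g$. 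The trade-off is mild: your argument economizes by recycling Lemma~\ref{lemma:erfi_1} and avoids introducing a second auxiliary bound on $\erfi$, at the cost of a substitution-heavy verification; the paper's route yields the slightly sharper intermediate estimate $\erfi^{-1}(y)\leq \tfrac12+\sqrt{\tfrac14+\log(y+1)}$ before relaxing it to the stated form, and keeps the two erfi lemmas independent of one another.
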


\begin{proof}[Proof of Lemma~\ref{lemma:erfi_2}]
We first show $\erfi(x)\geq \exp(x^2-x)-1$ for all $x\geq 0$. Let $f(x)=\erfi(x)-\exp(x^2-x)+1$, then $f(0)=0$, 
\begin{equation*}
f'(x)=\exp(x^2-x)\cdot\spar{\exp(x)-(2x-1)}.
\end{equation*}
It is easy to verify $\exp(x)\geq(2x-1)$ for all $x\geq 0$.

Next, for any $y\geq 0$ let $x=\erfi^{-1}(y)$, which is also nonnegative. From the first step,
\begin{equation*}
y=\erfi(x)\geq \exp\spar{\rpar{x-\frac{1}{2}}^2-\frac{1}{4}}-1,
\end{equation*}
therefore
\begin{equation*}
x\leq \frac{1}{2}+\sqrt{\frac{1}{4}+\log(y+1)}\leq 1+\sqrt{\log(y+1)}.
\end{equation*}
Substituting $x=\erfi^{-1}(y)$ completes the proof.
\end{proof}

\section{Analysis of the base algorithm}\label{section:base_proof}

The key step of our analysis is the following potential verification argument.

\onestep*

\begin{proof}[Proof of Lemma~\ref{lemma:one_step}]
The first, preparatory step is to show $\Phi_t(V,S)\leq \Phi_{t-1}(V,S)$ for all $V\geq 0$ and $S\geq-h_{t-1}$. To see this, notice that
\begin{equation*}
\Phi_t(V,S)=\phi(V+z_t+k_tS,S),
\end{equation*}
\begin{equation*}
\Phi_{t-1}(V,S)=\phi(V+z_{t-1}+k_{t-1}S,S).
\end{equation*}
From Appendix~\ref{section:potential_fact}, $\partial_{1} \phi(x,y)\leq 0$ for all $x> 0$ and $y\in\R$. Furthermore, compare the first argument on the above right hand sides, 
\begin{align*}
V+z_t+k_tS&=V+h_t\rpar{\frac{12\alpha +4}{2\alpha-1}h_t+2S}\\
&\geq V+h_{t-1}\rpar{\frac{12\alpha +4}{2\alpha-1}h_{t-1}+2S}\tag{$S\geq -h_{t-1}$ and $h_t\geq h_{t-1}$}\\
&=V+z_{t-1}+k_{t-1}S.
\end{align*}
Concluding this argument, we have shown $\Phi_t(V,S)\leq \Phi_{t-1}(V,S)$. It means to prove the present lemma, it suffices to show
\begin{equation*}
\Phi_t(V+c^2,S+c)-\Phi_{t}(V,S)-c\partial_2\Phi_t(V,S)\leq 0,
\end{equation*}
for all $V\geq 0$, $S\geq -h_{t}$ and $c\in[-h_t-\min(S,0),h_t]$. Since all the subscripts are the same $t$, let us drop them completely to simplify the exposition. Also, let us denote $h_t=G$, which hopefully makes the ``unit'' clearer. 

Now, let us view our remaining objective as a function of $c$,
\begin{equation*}
f_{V,S}(c)\defeq\Phi(V+c^2,S+c)-\Phi(V,S)-c\partial_2\Phi(V,S).
\end{equation*}
Taking the derivatives,
\begin{equation*}
f'_{V,S}(c)=2c\partial_1\Phi(V+c^2,S+c)+\partial_2\Phi(V+c^2,S+c)-\partial_2\Phi(V,S),
\end{equation*}
\begin{align}
f''_{V,S}(c)&=2\partial_1\Phi(V+c^2,S+c)+4c^2\partial_{11}\Phi(V+c^2,S+c)+4c\partial_{12}\Phi(V+c^2,S+c)+\partial_{22}\Phi(V+c^2,S+c)\nonumber\\
&\leq 2\partial_1\Phi(V+c^2,S+c)+4G^2\partial_{11}\Phi(V+c^2,S+c)+4G\abs{\partial_{12}\Phi(V+c^2,S+c)}+\partial_{22}\Phi(V+c^2,S+c).\label{eq:upper_f}
\end{align}
Note that $f_{V,S}(0)=f'_{V,S}(0)=0$. Therefore, to prove $f_{V,S}(c)\leq 0$, it suffices to show $f''_{V,S}(c)\leq 0$ for all considered values of $V$, $S$ and $c$. Also notice that the RHS of Eq.(\ref{eq:upper_f}) has a striking similarity to the Backward Heat Equation -- in fact, after a change of variable, the resulting expressions, namely Eq.(\ref{eq:upper_f_case1}) and Eq.(\ref{eq:upper_f_case2}) below, will resemble the standard BHE on $\phi$ ($\partial_1\phi+\half\partial_{22}\phi=0$) plus ``lower order terms''. The main goal of this proof is to control such lower order terms by properly choosing $k$ and $z$.

Concretely, due to the absolute value in Eq.($\ref{eq:upper_f}$), we will analyze two cases. Technically, the first case is harder, therefore we pick $k$ to simplify its analysis. The second case is relatively easier to handle. 

\paragraph{Case 1: $\partial_{12}\Phi(V+c^2,S+c)\leq 0$.} Substituting the derivatives of $\Phi$ by the derivatives of $\phi$, we have
\begin{equation}
f''_{V,S}(c)\leq 2\partial_1\phi+(k-2G)^2\partial_{11}\phi+2(k-2G)\partial_{12}\phi+\partial_{22}\phi\Big|_{(V+c^2+z+k(S+c),S+c)}.\label{eq:upper_f_case1}
\end{equation}
The RHS means we evaluate all the derivative functions at $(V+c^2+z+k(S+c),S+c)$. Plugging in our specific choice of $k$ and $\alpha$,
\begin{align*}
f''_{V,S}(c)&\leq 2\partial_1\phi+\partial_{22}\phi\Big|_{(V+c^2+z+2(S+c),S+c)}\tag{$k=2G$}\\
&\leq 2\rpar{\partial_1\phi+\alpha\partial_{22}\phi}\Big|_{(V+c^2+z+2(S+c),S+c)}\tag{$\alpha>\half$ and Lemma~\ref{lemma:convexity}}\\
&=0.\tag{$\phi$ satisfies the BHE with constant $\alpha$}
\end{align*}

\paragraph{Case 2: $\partial_{12}\Phi(V+c^2,S+c)\geq 0$.} Similar to the first case, 
\begin{equation}
f''_{V,S}(c)\leq 2\partial_1\phi+(k+2G)^2\partial_{11}\phi+2(k+2G)\partial_{12}\phi+\partial_{22}\phi\Big|_{(V+c^2+z+k(S+c),S+c)}.\label{eq:upper_f_case2}
\end{equation}
Consider the $k$-dependent terms in Eq.(\ref{eq:upper_f_case2}), $(k+2G)^2\partial_{11}\phi+2(k+2G)\partial_{12}\phi$. Our goal is to upper bound it by $(2\alpha-1)\partial_{22}\phi$, such that in total, the RHS of Eq.(\ref{eq:upper_f_case2}) becomes $2(\partial_1\phi+\alpha\partial_{22}\phi)$, which equals 0 due to the BHE. 

Plugging in the derivatives of $\phi$ from Appendix~\ref{section:potential_fact}, for all inputs $(x,y)$,
\begin{align*}
&(k+2G)^2\partial_{11}\phi+2(k+2G)\partial_{12}\phi-(2\alpha-1)\partial_{22}\phi\Big|_{(x,y)}\\
=~&\frac{\eps}{4\sqrt{\alpha}x^{3/2}}\exp\rpar{\frac{y^2}{4\alpha x}}\spar{(k+2G)^2\rpar{\frac{y^2}{2x}+\alpha}-2(k+2G)y-2x(2\alpha-1)}\\
=~&\frac{\eps}{4\sqrt{\alpha}x^{3/2}}\exp\rpar{\frac{y^2}{4\alpha x}}\spar{16G^2\rpar{\frac{y^2}{2x}+\alpha}-8Gy-2x(2\alpha-1)}
\end{align*}
We aim to show the bracket on the RHS is negative at $x=V+c^2+z+k(S+c)$ and $y=S+c$, where $k=2G$. This amounts to showing
\begin{equation*}
\Diamond\defeq\frac{4G^2(S+c)^2}{V+c^2+z+2G(S+c)}+8\alpha G^2-(4\alpha G+2G)(S+c)-(2\alpha -1)(V+c^2+z)\leq 0.
\end{equation*}

The idea is that we can pick a large enough $z$ to make it hold. Concretely, 
\begin{itemize}
\item If $S+c>0$, then since $\alpha>\half$,
\begin{align*}
\Diamond&\leq \frac{4G^2(S+c)^2}{2G(S+c)}+8\alpha G^2-(4\alpha G+2G)(S+c)-(2\alpha -1)z\\
&=-4\alpha G(S+c)+8\alpha G^2-(2\alpha -1)z\\
&\leq 8\alpha G^2-(2\alpha -1)z.
\end{align*}
and it suffices to pick
\begin{equation*}
z\geq \frac{8\alpha}{2\alpha-1}G^2.
\end{equation*}
\item If $S+c\leq 0$, then since we require $c\geq -G-\min(S,0)$, we also have $S+c\geq \min(S,0)+c\geq -G$. As long as $z\geq 4G^2$, 
\begin{align*}
\Diamond&\leq \frac{4G^4}{z-2G^2}+12\alpha G^2+2G^2-(2\alpha-1)z\\
&\leq 12\alpha G^2+4G^2-(2\alpha-1)z.
\end{align*}
It suffices to pick
\begin{equation*}
z\geq \frac{12\alpha +4}{2\alpha-1}G^2.
\end{equation*}
\end{itemize}
In summary, $z=\frac{12\alpha +4}{2\alpha-1}G^2$ ensures $\Diamond\leq 0$. Due to the BHE on $\phi$,
\begin{equation*}
f''_{V,S}(c)\leq 2(\partial_1\phi+\alpha\partial_{22}\phi)\Big|_{(V+c^2+z+2G(S+c),S+c)}= 0.
\end{equation*}

Combining the two cases completes the proof.
\end{proof}

The following lemma characterizes the Fenchel conjugate of $\Phi_t$ (with respect to its second argument). 

\begin{lemma}[Conjugate]\label{lemma:conjugate}
With $\eps,\alpha,k_t>0$ and $z_t>k_th_t$, for all $u\geq 0$,
\begin{equation*}
\sup_{S\in[-h_t,\infty)}\spar{uS-\Phi_t(V,S)}\leq u\bar S+\eps\sqrt{\alpha (V+z_t+k_t\bar S)},
\end{equation*}
where
\begin{equation*}
\bar S= 4\alpha k_t\rpar{1+\sqrt{\log(2u\eps^{-1}+1)}}^2+\sqrt{4\alpha (V+z_t)}\rpar{1+\sqrt{\log(2u\eps^{-1}+1)}}.
\end{equation*}
\end{lemma}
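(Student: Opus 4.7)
My plan is to identify (or upper-bound) the maximizer $S^\ast$ of $uS-\Phi_t(V,S)$, then finish using an elementary lower bound on $\Phi_t$. Since $\Phi_t(V,\cdot)$ is convex (Lemma~\ref{lemma:convexity}) and $\partial_2\Phi_t(V,0)<0$ (Lemma~\ref{lemma:neg_pred}), the derivative $\partial_2\Phi_t(V,\cdot)$ is increasing and strictly negative on $[-h_t,0]$. Hence for any $u\geq 0$ the supremum over $S\in[-h_t,\infty)$ is attained at some $S^\ast\geq 0$ satisfying $\partial_2\Phi_t(V,S^\ast)=u$. The argument then reduces to two claims: (i) $S^\ast\leq \bar S$, and (ii) $\Phi_t(V,S^\ast)\geq -\eps\sqrt{\alpha(V+z_t+k_t\bar S)}$. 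Once both hold, combining with $u\geq 0$ and $S^\ast\leq \bar S$ gives $uS^\ast-\Phi_t(V,S^\ast)\leq u\bar S+\eps\sqrt{\alpha(V+z_t+k_t\bar S)}$, as claimed.

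For (i), since $\partial_2\Phi_t(V,\cdot)$ is increasing, it suffices to verify $\partial_2\Phi_t(V,\bar S)\geq u$. Introduce the shorthands $\bar X:=V+z_t+k_t\bar S$, $\bar z:=\bar S/\sqrt{4\alpha\bar X}$, $\lambda:=1+\sqrt{\log(2u\eps^{-1}+1)}$, $A:=4\alpha k_t\lambda^2$, and $B:=\sqrt{4\alpha(V+z_t)}\lambda$, so that $\bar S=A+B$. A direct computation yields $4\alpha\bar X=(A^2+AB+B^2)/\lambda^2$, hence
\begin{equation*}
\bar z^2=\frac{(A+B)^2\lambda^2}{A^2+AB+B^2}=\lambda^2\cdot\frac{A^2+2AB+B^2}{A^2+AB+B^2}\geq \lambda^2,
\end{equation*}
so $\bar z\geq\lambda\geq 1$. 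Using the derivative formula from Appendix~\ref{section:potential_fact},
\begin{equation*}
\partial_2\Phi_t(V,\bar S)-u=\bigl[\eps\erfi(\bar z)-u\bigr]-\frac{\eps k_t\sqrt{\alpha}}{2\sqrt{\bar X}}\exp(\bar z^2).
\end{equation*}
Lemma~\ref{lemma:erfi_2} gives $\erfi(\lambda)\geq 2u\eps^{-1}$, so $\eps\erfi(\bar z)\geq 2u$ and $\eps\erfi(\bar z)-u\geq \half\eps\erfi(\bar z)$. Lemma~\ref{lemma:erfi_1} (applicable because $\bar z\geq 1$) then gives $\half\eps\erfi(\bar z)\geq \eps\sqrt{\alpha\bar X}\exp(\bar z^2)/(2\bar S)$. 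Matching this against the correction term, the remaining inequality $\sqrt{\alpha\bar X}/\bar S\geq k_t\sqrt{\alpha}/\sqrt{\bar X}$ reduces to $\bar X\geq k_t\bar S$, i.e., $V+z_t\geq 0$, which is trivially true.

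For (ii), I would use that $\int_0^z\erfi(u)\,du\geq 0$ for every $z\in\R$ (immediate since $\erfi$ is odd and nonnegative on $\R_{\geq 0}$), so the bracketed factor in the definition of $\phi$ is at least $-1$, giving $\phi(x,y)\geq -\eps\sqrt{\alpha x}$. Specializing, $\Phi_t(V,S^\ast)=\phi(V+z_t+k_tS^\ast,S^\ast)\geq -\eps\sqrt{\alpha(V+z_t+k_tS^\ast)}\geq -\eps\sqrt{\alpha(V+z_t+k_t\bar S)}$, where the last step uses $S^\ast\leq \bar S$ from (i). Together with $uS^\ast\leq u\bar S$, this completes the proof.

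The main obstacle is the algebraic verification in (i). The correction term $-\frac{\eps k_t\sqrt{\alpha}}{2\sqrt{\bar X}}\exp(\bar z^2)$ --- absent from the pure-$\phi$ conjugate computation in the continuous-time analogue (Eq.~\ref{eq:ct_regret}) --- forces both the factor $2$ inside $\log(2u\eps^{-1}+1)$, which provides the slack $\eps\erfi(\bar z)-u\geq \half\eps\erfi(\bar z)$, and the leading term $4\alpha k_t\lambda^2$ in $\bar S$, which ensures $\bar z\geq \lambda$ despite the coupling that $\bar X$ itself depends on $\bar S$. Balancing these two effects without sacrificing tightness is what dictates the specific constants appearing in $\bar S$.
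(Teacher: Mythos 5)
Your proof is correct, and the middle step---bounding the maximizer $S^*$---takes a genuinely different route from the paper's. The paper solves for $S^*$ via the first-order condition $u=\partial_2\Phi_t(V,S^*)$, splits into two cases according to whether $S^*\lessgtr\sqrt{4\alpha(V+z_t+k_tS^*)}$, and in each case solves a quadratic inequality in $S^*$ before combining with $\erfi^{-1}(2u\eps^{-1})\leq\lambda$; the union of the two case bounds is what produces the specific form of $\bar S$. You instead take $\bar S$ as given and verify directly that $\partial_2\Phi_t(V,\bar S)\geq u$, concluding $S^*\leq\bar S$ by monotonicity of $\partial_2\Phi_t(V,\cdot)$ (Lemma~\ref{lemma:convexity}). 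The identity $4\alpha\bar X=(A^2+AB+B^2)/\lambda^2$ and the resulting inequality $\bar z\geq\lambda$ check out, as does the absorption of the correction term $\frac{\eps k_t\sqrt{\alpha}}{2\sqrt{\bar X}}\exp(\bar z^2)$ into half of $\eps\erfi(\bar z)$ via Lemma~\ref{lemma:erfi_1} and $\bar X\geq k_t\bar S$; this is the same ``half-of-erfi dominates the drift term'' trick the paper uses to establish its Eq.~(\ref{eq:prediction_lower}), just deployed at the single point $\bar S$ rather than on a whole regime of $S$. Your forward verification avoids the case split and the quadratic-inequality solving entirely, at the cost of needing to guess $\bar S$ in advance (which the paper's derivation explains); the concluding step (ii) is identical to the paper's. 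One presentational nit: you assert the existence of $S^*\geq 0$ with $\partial_2\Phi_t(V,S^*)=u$ before proving claim (i), but the existence actually follows from (i) together with $\partial_2\Phi_t(V,0)<0$ and the intermediate value theorem, so the argument is not circular---just stated in the reverse of its logical order.
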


\begin{proof}[Proof of Lemma~\ref{lemma:conjugate}]
Throughout this proof, we drop all the subscript $t$ and write $G$ in place of $h_t$. 

We first show that the supremum over $S$ in the Fenchel conjugate is attainable by some $S^*\in[0,\infty)$. To this end, define the function $f(S)\defeq uS-\Phi(V,S)$. $f$ is continuous, with $f'(S)=u-\partial_2\Phi(V,S)$. Moreover, due to Lemma~\ref{lemma:convexity}, $f$ is concave on $[-G,\infty)$. The existence of $S^*\in[0,\infty)$ then follows from analyzing the boundary.
\begin{itemize}
\item For all $S\in[-G,0]$, we have $f'(S)\geq 0$. The reason is $u\geq 0$, and $\partial_2\Phi(V,S)\leq 0$ due to Lemma~\ref{lemma:neg_pred}.
\item For sufficiently large $S$, we aim to show $f'(S)<0$. Let us begin by writing down $\partial_2\Phi(V,S)$, from Appendix~\ref{section:potential_fact}.
\begin{equation*}
\partial_2\Phi(V,S)=\eps\erfi\rpar{\frac{S}{\sqrt{4\alpha (V+z+kS)}}}-\frac{\eps k\sqrt{\alpha}}{2\sqrt{V+z+kS}}\exp\rpar{\frac{S^2}{4\alpha (V+z+kS)}}.
\end{equation*}
Now consider large $S$ that satisfies $S\geq \sqrt{4\alpha(V+z+kS)}$. Due to an estimate of the erfi function (Lemma~\ref{lemma:erfi_1}),
\begin{equation*}
\erfi\rpar{\frac{S}{\sqrt{4\alpha (V+z+kS)}}}\geq \frac{\sqrt{\alpha(V+z+kS)}}{S}\exp\rpar{\frac{S^2}{4\alpha (V+z+kS)}}.
\end{equation*}
Moreover,
\begin{equation*}
\frac{\sqrt{\alpha(V+z+kS)}}{S}-\frac{k\sqrt{\alpha}}{\sqrt{V+z+kS}}=\frac{\sqrt{\alpha}(V+z)}{S\sqrt{V+z+kS}}\geq 0.
\end{equation*}
Therefore,
\begin{align}
\partial_2\Phi(V,S)&=\spar{\frac{\eps}{2}\erfi\rpar{\frac{S}{\sqrt{4\alpha (V+z+kS)}}}-\frac{\eps k\sqrt{\alpha}}{2\sqrt{V+z+kS}}\exp\rpar{\frac{S^2}{4\alpha (V+z+kS)}}}\nonumber\\
&\hspace{200pt} +\frac{\eps}{2}\erfi\rpar{\frac{S}{\sqrt{4\alpha (V+z+kS)}}}\nonumber\\
&\geq \frac{\eps}{2}\erfi\rpar{\frac{S}{\sqrt{4\alpha (V+z+kS)}}}.\label{eq:prediction_lower}
\end{align}
For sufficiently large $S$, we have $\rhs> u$, hence $f'(S)<0$.
\end{itemize}

Summarizing the above, we have shown that there exists $S^*\in[0,\infty)$ such that
\begin{equation*}
\Phi^*_V(u)\defeq \sup_{S\in[-G,\infty)}uS-\Phi(V,S)=uS^*-\Phi(V,S^*). 
\end{equation*}
Moreover, $S^*$ should satisfy the first order optimality condition $f'(S^*)=0$, i.e., $u=\partial_2\Phi(V,S^*)$. Our goal next is to upper bound $S^*$ by a function of $u$. Again, we analyze two cases.

\paragraph{Case 1.} If $S^*$ satisfies $S^*\leq\sqrt{4\alpha(V+z+kS^*)}$, then by taking the square on both sides and regrouping the terms, we have $(S^*)^2-4\alpha kS^*-4\alpha(V+z)<0$. Solving this quadratic inequality, 
\begin{align*}
S^*&\leq \frac{1}{2}\spar{4\alpha k+\sqrt{(4\alpha k)^2+16\alpha (V+z)}}\\
&= 2\alpha k+\sqrt{4\alpha^2 k^2+4\alpha (V+z)}\\
&\leq 4\alpha k+\sqrt{4\alpha(V+z)}.
\end{align*}

\paragraph{Case 2.} If $S^*$ satisfies $S^*\geq\sqrt{4\alpha(V+z+kS^*)}$, then same as the earlier analysis in the present proof, Eq.(\ref{eq:prediction_lower}), we have
\begin{equation*}
u\geq \frac{\eps}{2}\erfi\rpar{\frac{S^*}{\sqrt{4\alpha (V+z+kS^*)}}}.
\end{equation*}
For conciseness, define the notation $p=\erfi^{-1}(2u\eps^{-1})$. Then, $(S^*)^2-4\alpha kp^2S^*-4\alpha p^2(V+z)\leq 0$. Solving the quadratic inequality,
\begin{align*}
S^*&\leq \frac{1}{2}\spar{4\alpha kp^2+\sqrt{(4\alpha kp^2)^2+16\alpha p^2(V+z)}}\\
&\leq 2\alpha kp^2+\sqrt{4\alpha^2 k^2p^4+4\alpha p^2(V+z)}\\
&\leq 4\alpha kp^2+\sqrt{4\alpha (V+z)}p.
\end{align*}

Now we can combine the above two cases. Specifically, $p\leq1+\sqrt{\log(2u\eps^{-1}+1)}$ due to Lemma~\ref{lemma:erfi_2}. Therefore,
\begin{equation*}
S^*\leq 4\alpha k\rpar{1+\sqrt{\log(2u\eps^{-1}+1)}}^2+\sqrt{4\alpha (V+z)}\rpar{1+\sqrt{\log(2u\eps^{-1}+1)}}.
\end{equation*}
Define the RHS as $\bar S$. Then, from the definition of the Fenchel conjugate, 
\begin{align*}
\sup_{S\in[-G,\infty)}\spar{uS-\Phi(V,S)}&=uS^*-\Phi(V,S^*)\\
&=uS^*-\eps\sqrt{\alpha (V+z+kS^*)}\spar{2\int_0^{\frac{S^*}{\sqrt{4\alpha (V+z+kS^*)}}}\erfi(u)du-1}\\
&\leq u\bar S+\eps\sqrt{\alpha (V+z+k\bar S)}.
\end{align*}
Plugging in $\bar S$ completes the proof.
\end{proof}

Combining the previous two lemmas, the following lemma characterizes the regret of the base algorithm. 

\begin{lemma}[Regret of the base algorithm]\label{lemma:base}
With $\eps>0$, $\alpha>\half$, $k_t=2h_t$ and $z_t=\frac{12\alpha +4}{2\alpha-1}h_t^2$, Algorithm~\ref{alg:base} guarantees for all $T\in\N_+$ and $\tilde u\geq 0$, 
\begin{equation*}
\sum_{t=1}^T\tilde l_t(\tilde y_t-\tilde u)\leq \eps\sqrt{\alpha\rpar{\tilde V_T+z_T+k_T\bar S}}+\tilde u\bar S,
\end{equation*}
where
\begin{equation*}
\bar S= 4\alpha k_T\rpar{1+\sqrt{\log(2\tilde u\eps^{-1}+1)}}^2+\sqrt{4\alpha \rpar{\tilde V_T+z_T}}\rpar{1+\sqrt{\log(2\tilde u\eps^{-1}+1)}}.
\end{equation*}
\end{lemma}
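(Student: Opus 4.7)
The plan is to telescope the one-step potential bound (Lemma~\ref{lemma:one_step}) along the algorithm's trajectory, and then apply the Fenchel conjugate estimate (Lemma~\ref{lemma:conjugate}) via a standard loss-regret duality, exactly mirroring the flow of the continuous-time proof of Theorem~\ref{theorem:continuous} (where It\^{o}'s formula plays the role of Lemma~\ref{lemma:one_step} and the BHE/potential choice plays the role of Lemma~\ref{lemma:conjugate}).

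First, I will instantiate Lemma~\ref{lemma:one_step} at each round $t$ with $V=\tilde V_{t-1}$, $S=\tilde S_{t-1}$, and $c=-\tilde l_t$. Before doing so I must verify the hypotheses. The update rules give $\tilde V_t=\tilde V_{t-1}+\tilde l_t^2$ and $\tilde S_t=\tilde S_{t-1}-\tilde l_t$, so the left-hand side of Lemma~\ref{lemma:one_step} becomes $\Phi_t(\tilde V_t,\tilde S_t)-\Phi_{t-1}(\tilde V_{t-1},\tilde S_{t-1})+\tilde l_t\,\tilde y_t$ upon recognizing the algorithm's prediction $\tilde y_t=\partial_2\Phi_t(\tilde V_{t-1},\tilde S_{t-1})$. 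The condition $\tilde V_{t-1}\geq 0$ is immediate; the condition $\tilde S_{t-1}\geq -h_{t-1}$ follows by induction from the well-posedness hypothesis $\sum_{i=1}^{t-1}\tilde l_i\leq h_{t-1}$, i.e., $\tilde S_{t-1}=-\sum_{i=1}^{t-1}\tilde l_i\geq -h_{t-1}$. Finally, the range requirement $-\tilde l_t\in[-h_t-\min(\tilde S_{t-1},0),h_t]$ splits into two subcases: when $\tilde S_{t-1}\geq 0$ it reduces to $|\tilde l_t|\leq h_t$ (given), and when $\tilde S_{t-1}<0$ it is equivalent to $\tilde S_t\geq -h_t$, which is again guaranteed by the well-posedness condition $\sum_{i=1}^t\tilde l_i\leq h_t$.

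Telescoping over $t=1,\ldots,T$ from Lemma~\ref{lemma:one_step} then yields
\begin{equation*}
\sum_{t=1}^T\tilde l_t\,\tilde y_t\;\leq\;\Phi_0(0,0)-\Phi_T(\tilde V_T,\tilde S_T),
\end{equation*}
where $\Phi_0$ is defined via $h_0=h_1$ so that $\Phi_0(0,0)=\phi(z_1,0)=-\eps\sqrt{\alpha z_1}\leq 0$ by the explicit form of $\phi$ at $y=0$. Next, using $\sum_{t=1}^T\tilde l_t=-\tilde S_T$, the left-hand side of the desired bound can be rewritten as
\begin{equation*}
\sum_{t=1}^T\tilde l_t(\tilde y_t-\tilde u)\;=\;\sum_{t=1}^T\tilde l_t\,\tilde y_t+\tilde u\,\tilde S_T\;\leq\;\Phi_0(0,0)+\bigl[\tilde u\,\tilde S_T-\Phi_T(\tilde V_T,\tilde S_T)\bigr].
\end{equation*}

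To conclude, I bound the bracketed quantity by the supremum over admissible $S$, namely $\sup_{S\in[-h_T,\infty)}[\tilde u S-\Phi_T(\tilde V_T,S)]$ (valid since $\tilde S_T\in[-h_T,\infty)$ by the argument above), and invoke Lemma~\ref{lemma:conjugate} with $V=\tilde V_T$ and $u=\tilde u\geq 0$. This produces $\tilde u\bar S+\eps\sqrt{\alpha(\tilde V_T+z_T+k_T\bar S)}$ with the stated $\bar S$. Dropping the nonpositive $\Phi_0(0,0)$ term gives exactly the claimed bound. There is no real obstacle in this step once the one-step lemma and the conjugate lemma are in hand; the only care needed is the bookkeeping of the admissibility conditions in Lemma~\ref{lemma:one_step}, which I have already discharged via the well-posedness property established in Lemma~\ref{lemma:well_pose}.
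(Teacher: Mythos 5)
Your proposal is correct and follows essentially the same route as the paper's proof: instantiate Lemma~\ref{lemma:one_step} with $c=-\tilde l_t$, telescope to get $\sum_{t=1}^T\tilde l_t\tilde y_t\leq\Phi_0(0,0)-\Phi_T(\tilde V_T,\tilde S_T)$, apply the loss-regret duality together with Lemma~\ref{lemma:conjugate}, and drop the nonpositive $\Phi_0(0,0)=-\eps\sqrt{\alpha z_0}$ term. Your explicit verification of the admissibility conditions of Lemma~\ref{lemma:one_step} via Lemma~\ref{lemma:well_pose} is a welcome addition that the paper leaves implicit.
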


\begin{proof}[Proof of Lemma~\ref{lemma:base}]
First, we can obtain a cumulative loss upper bound by simply summing the one-step potential verification bound (Lemma~\ref{lemma:one_step}). Letting $c=-\tilde l_t$, $V=\tilde V_{t-1}$ and $S=\tilde S_{t-1}$ in Lemma~\ref{lemma:one_step},
\begin{equation*}
\tilde l_t\tilde y_t\leq \Phi_{t-1}\rpar{\tilde V_{t-1},\tilde S_{t-1}}-\Phi_t\rpar{\tilde V_{t},\tilde S_{t}},
\end{equation*}
\begin{equation*}
\sum_{t=1}^T\tilde l_t\tilde y_t\leq \Phi_0\rpar{0,0}-\Phi_T\rpar{\tilde V_{T},\tilde S_{T}}.
\end{equation*}

Then, the proof follows from a loss-regret duality and the Fenchel conjugate computation from Lemma~\ref{lemma:conjugate}.
\begin{align*}
\sum_{t=1}^T\tilde l_t(\tilde y_t-\tilde u)&\leq \tilde S_T\tilde u+\Phi_0(0,0)-\Phi_T\rpar{\tilde V_T,\tilde S_T}\\
&\leq \Phi_0(0,0)+\sup_{S\in[-h_T,\infty)}\spar{S\tilde u-\Phi_T\rpar{\tilde V_T,S}}\\
&\leq-\eps\sqrt{\alpha\cdot \frac{12\alpha +4}{2\alpha-1}h_0^2}+\eps\sqrt{\alpha\rpar{\tilde V_T+z_T+k_T\bar S}}+\tilde u\bar S\\
&\leq \eps\sqrt{\alpha\rpar{\tilde V_T+z_T+k_T\bar S}}+\tilde u\bar S.
\end{align*}
Plugging in $\bar S$ from Lemma~\ref{lemma:conjugate} completes the proof.
\end{proof}

\section{Analysis of the meta algorithm}\label{section:appendix_meta}

First, we prove Lemma~\ref{lemma:well_pose}, which certifies the well-posedness of our algorithm.

\well*

\begin{proof}[Proof of Lemma~\ref{lemma:well_pose}]
First, notice that $|\tilde l_t|\leq \abs{l_t}=\abs{\inner{g_t}{w_t}}\leq h_t$. 

Next, we prove by induction. Consider $-\sum_{i=1}^{t-1}\tilde l_i$, which is defined as $\tilde S_{t-1}$ in the base algorithm (Algorithm~\ref{alg:base}). Suppose $\tilde S_{t-1}\geq-h_{t-1}$, which trivially holds at $t=1$. Let us analyze two cases.
\begin{itemize}
\item If $\tilde S_{t-1}\geq 0$, then $-\sum_{i=1}^t\tilde l_i=\tilde S_{t-1}-\tilde l_t\geq \tilde S_{t-1}-|\tilde l_t|\geq -h_t$.
\item If $-h_{t-1}\leq \tilde S_{t-1}<0$, then due to Lemma~\ref{lemma:neg_pred}, the prediction $\tilde y_t$ of the base algorithm satisfies $\tilde y_t<0$. The meta algorithm projects it to $y_t=0$. Then, due to our definition of $\tilde l_t$ in the meta algorithm,
\begin{equation*}
\tilde l_t=\begin{cases}
l_t,& l_t\leq 0,\\
0,& \textrm{else},
\end{cases}
\end{equation*}
which is non-positive. Therefore, $-\sum_{i=1}^t\tilde l_i=\tilde S_{t-1}-\tilde l_t\geq \tilde S_{t-1}\geq -h_{t-1}\geq -h_t$.
\end{itemize}
An induction completes the proof.
\end{proof}

Next, we prove our main result.

\meta*

\begin{proof}[Proof of Theorem~\ref{thm:meta}]
Since the meta algorithm simply applies two existing black-box reductions \cite{cutkosky2018black,cutkosky2020parameter}, the proof is straightforward given Lemma~\ref{lemma:base}, the regret bound of the 1D base algorithm. First, due to a polar decomposition theorem \cite[Theorem~2]{cutkosky2018black}, the regret can be decomposed into the regret of $\A_B$ with respect to $u/\norm{u}$, plus the regret of $y_t$ with respect to $\norm{u}$. Then, the latter is upper-bounded by the regret of $\tilde y_t$ evaluated on the surrogate losses $\tilde l_t$ -- this is because our definition of $y_t$ and $\tilde l_t$ follows the procedure of \cite[Theorem~2]{cutkosky2020parameter}, where a convex constraint can be added to an unconstrained algorithm without changing its regret bound. In summary, we have
\begin{align*}
\reg_T(Env,u)&= \sum_{t=1}^Tl_t(y_t-\norm{u})+\norm{u}\sum_{t=1}^T\inner{g_t}{w_t-u/\norm{u}}\\
&\leq \sum_{t=1}^T\tilde l_t(\tilde y_t-\norm{u})+\norm{u}\sum_{t=1}^T\inner{g_t}{w_t-u/\norm{u}}.
\end{align*}
The two regret terms on the RHS represent the regret bound of $\A_{1d}$ and $\A_\ball$, respectively. 

Now, the first term follows from Lemma~\ref{lemma:base}, where $\tilde V_T=\sum_{t=1}^T\tilde l^2_t\leq \sum_{t=1}^Tl^2_t=\sum_{t=1}^T\inner{g_t}{w_t}^2\leq V_T$. As for the regret of $\A_\ball$, due to \cite[Theorem 4.14]{orabona2023modern},
\begin{equation*}
\sum_{t=1}^T\inner{g_t}{w_t-u/\norm{u}}\leq 2\sqrt{2V_T}.
\end{equation*}
Combining these two components completes the proof.
\end{proof}

Finally, let us use asymptotic orders to make this bound a bit more interpretable. Consider the regime of large $\norm{u}$ and $V_T$, i.e., $\norm{u}\gg \eps$ and $V_T\gg h_T^2$. We preserve the dependence on $\eps^{-1}$, as $\eps$ is a ``confidence hyperparameter'' that can get arbitrarily close to $0$. In contrast, $\alpha$ is a moderate constant slightly larger than $\half$, therefore we subsume it by the big-Oh. 

Using $\log(1+x)\leq x$, we can crudely bound $\bar S$ in Theorem~\ref{thm:meta} by
\begin{align*}
\bar S&\leq 8\alpha h_T\rpar{1+\sqrt{2\norm{u}\eps^{-1}}}^2+\sqrt{4\alpha(V_T+z_T)}\rpar{1+\sqrt{2\norm{u}\eps^{-1}}}\\
&=8\alpha h_T+\sqrt{4\alpha (V_T+z_T)}+o\rpar{\norm{u}\eps^{-1}\sqrt{V_T}}.
\end{align*}
Plugging this crude bound of $\bar S$ into the first term of the regret bound, we have
\begin{align*}
\reg_T(Env,u)&\leq \eps\sqrt{\alpha\rpar{V_T+z_T+16\alpha h_T^2+2h_T\sqrt{4\alpha(V_T+z_T)}}}+\eps\sqrt{o\rpar{\norm{u}\eps^{-1}\sqrt{V_T}}}+\norm{u}\rpar{\bar S+2\sqrt{2V_T}}\\
&\leq \eps\sqrt{\alpha}\rpar{\sqrt{V_T+z_T}+4\sqrt{\alpha}h_T}+o\rpar{\norm{u}V_T^{1/4}}+\norm{u}\rpar{\bar S+2\sqrt{2V_T}}.
\end{align*}

Next, notice that $\bar S=O\rpar{\sqrt{V_T\log(\norm{u}\eps^{-1})}\vee h_T\log(\norm{u}\eps^{-1})}$. Using it to replace the remaining $\bar S$ above,
\begin{equation*}
\reg_T(Env,u)\leq \eps\rpar{\sqrt{\alpha(V_T+z_T)}+4\alpha h_T}+o\rpar{\norm{u}V_T^{1/4}}+\norm{u} O\rpar{\sqrt{V_T\log(\norm{u}\eps^{-1})}\vee h_T\log(\norm{u}\eps^{-1})}.
\end{equation*}
The second term can be assimilated into the third term. The result becomes
\begin{equation}\label{eq:regret_simple}
\reg_T(Env,u)\leq \eps\rpar{\sqrt{\alpha\rpar{V_T+z_T}}+4\alpha h_T}+\norm{u} O\rpar{\sqrt{V_T\log(\norm{u}\eps^{-1})}\vee h_T\log(\norm{u}\eps^{-1})}.
\end{equation}

Note that this bound is not only valid for large $\norm{u}$, but also valid when $u=0$ (this can be directly verified from Theorem~\ref{thm:meta}). Therefore, it characterizes the loss-regret tradeoff. 

\end{document}